\documentclass{article}

\usepackage[margin=1in]{geometry} %
\usepackage{titling}              %
\usepackage{authblk}              %
\pretitle{\vspace{-1em}\noindent\rule{\linewidth}{0.4pt}\par\nointerlineskip
          \vskip 0.6em\begin{flushleft}\LARGE\bfseries}
\posttitle{\par\end{flushleft}\vskip 0.5em}
\preauthor{\begin{flushleft}\large}
\postauthor{\par\end{flushleft}}
\predate{\begin{flushleft}\small}
\postdate{\par\end{flushleft}\nointerlineskip\noindent\rule{\linewidth}{0.4pt}}
\renewcommand\Affilfont{\normalsize}
\makeatletter
\renewcommand\AB@affilsepx{, \protect\Affilfont}
\makeatother

\usepackage[backend=biber, style=numeric, natbib=true, sorting=nyt, sortcites=true,
            maxbibnames=20, minbibnames=10, maxcitenames=2, mincitenames=1,
            giveninits=false]{biblatex}
\DeclareNameAlias{default}{given-family}
\DeclareFieldFormat{url}{\url{#1}}
\AtEveryBibitem{%
  \clearfield{eprintclass}%
  \iffieldundef{eprint}
    {\iffieldundef{doi}{}{\clearfield{url}}}
    {\clearfield{doi}\clearfield{url}}}
\makeatletter

\renewcommand{\normalsize}{%
  \@setfontsize\normalsize\@xpt\@xipt
  \abovedisplayskip      7\p@ \@plus 2\p@ \@minus 5\p@
  \abovedisplayshortskip \z@ \@plus 3\p@
  \belowdisplayskip      \abovedisplayskip
  \belowdisplayshortskip 4\p@ \@plus 3\p@ \@minus 3\p@
}
\normalsize

\def\@listi  {\leftmargin\leftmargini}
\def\@listii {\leftmargin\leftmarginii
              \labelwidth\leftmarginii
              \advance\labelwidth-\labelsep
              \topsep  2\p@ \@plus 1\p@    \@minus 0.5\p@
              \parsep  1\p@ \@plus 0.5\p@ \@minus 0.5\p@
              \itemsep \parsep}
\def\@listiii{\leftmargin\leftmarginiii
              \labelwidth\leftmarginiii
              \advance\labelwidth-\labelsep
              \topsep    1\p@ \@plus 0.5\p@ \@minus 0.5\p@
              \parsep    \z@
              \partopsep 0.5\p@ \@plus 0\p@ \@minus 0.5\p@
              \itemsep \topsep}
\def\@listiv {\leftmargin\leftmarginiv
              \labelwidth\leftmarginiv
              \advance\labelwidth-\labelsep}
\def\@listv  {\leftmargin\leftmarginv
              \labelwidth\leftmarginv
              \advance\labelwidth-\labelsep}
\def\@listvi {\leftmargin\leftmarginvi
              \labelwidth\leftmarginvi
              \advance\labelwidth-\labelsep}

\newlength{\@capskipa}
\newlength{\@capskipb}
\renewenvironment{table}
  {\setlength{\abovecaptionskip}{\@capskipb}%
   \setlength{\belowcaptionskip}{\@capskipa}%
   \@float{table}}
  {\end@float}

\patchcmd{\@starttoc}{\makeatletter}{\makeatletter\parskip\z@}{}
  {\PackageWarning{custom_style}{Could not patch \string\@starttoc}}

\renewcommand{\footnoterule}{\kern-3\p@ \hrule width 12pc \kern 2.6\p@}

\makeatother

\usepackage[T1]{fontenc}    %
\usepackage[colorlinks=true, linkcolor=blue, citecolor=blue, urlcolor=blue, hypertexnames=false]{hyperref}       %
\usepackage{url}            %
\usepackage{booktabs}       %
\usepackage{amsfonts}       %
\usepackage{nicefrac}       %
\usepackage{microtype}      %
\usepackage{xcolor}         %

\usepackage{amsmath,amsfonts,bm}

\def\1{\bm{1}}

\def\vv{{\bm{v}}}

\def\mA{{\bm{A}}}
\def\mB{{\bm{B}}}
\def\mC{{\bm{C}}}

\def\mG{{\bm{G}}}

\def\mI{{\bm{I}}}

\def\mM{{\bm{M}}}

\def\mO{{\bm{O}}}

\def\mQ{{\bm{Q}}}
\def\mR{{\bm{R}}}

\def\mU{{\bm{U}}}
\def\mV{{\bm{V}}}
\def\mW{{\bm{W}}}
\def\mX{{\bm{X}}}

\def\mZ{{\bm{Z}}}

\def\mSigma{{\bm{\Sigma}}}

\DeclareMathAlphabet{\mathsfit}{\encodingdefault}{\sfdefault}{m}{sl}
\SetMathAlphabet{\mathsfit}{bold}{\encodingdefault}{\sfdefault}{bx}{n}

\newcommand{\R}{\mathbb{R}}

\usepackage{graphicx}
\graphicspath{{figures/}}

\usepackage{enumitem}

\usepackage{algorithm}
\usepackage{algpseudocode}

\usepackage[page, header, toc]{appendix}

\usepackage{amsthm,amssymb}

\usepackage[capitalize,nameinlink]{cleveref}

\newtheorem{theorem}{Theorem}

\newtheorem{definition}[theorem]{Definition}

\title{Dion3: Full-stack orthogonal updates}

\DeclareRobustCommand{\msrnote}{\footnotemark[1]}
\author[1]{Noah Amsel\thanks{Work performed while at Microsoft Research.}}
\author[2]{Jack Zhang}
\author[3]{Kwangjun Ahn\msrnote}
\author[4]{Ali Naeimi}
\author[5]{Austin Feng\msrnote}
\author[2]{Berlin Chen}
\author[2]{Tri Dao}
\author[6]{John Langford}
\affil[1]{New York University}
\affil[2]{Princeton University}
\affil[3]{NVIDIA}
\affil[4]{Independent Researcher}
\affil[5]{Yale University}
\affil[6]{Microsoft Research}

\begin{document}
\raggedbottom

\maketitle

\begin{abstract}
The Muon optimizer incurs a significant overhead cost due to its cubic-time Newton-Schulz orthogonalization step.
When weights are sharded, communication overhead compounds this computational cost, eroding the benefits of Muon in many settings.
We present Dion3, a revision of Muon that targets this overhead at every level of the stack.
Our Gram Newton-Schulz algorithm reduces the FLOP cost of orthogonalization, our CuteDSL kernels accelerate it by exploiting symmetry, and our megabatching strategy reduces communication overhead.
Moreover, we propose a simple change to the update rule that cuts costs even further: selecting only a fraction of the momentum matrix's rows to orthogonalize at each step.
This update rule improves on Dion (another ``compressed'' version of Muon), in both speed and performance \citep{ahn2025dion}.
Overall, Dion3 matches or improves on the loss achieved by Muon but reduces optimizer step time by up to $6\times$.
Dion3 is available via the \texttt{dion} package\footnote{\url{https://github.com/microsoft/dion}} as a drop-in replacement for Muon.

\end{abstract}

\begin{figure}[ht!]
\centering
\includegraphics[width=0.75\textwidth]{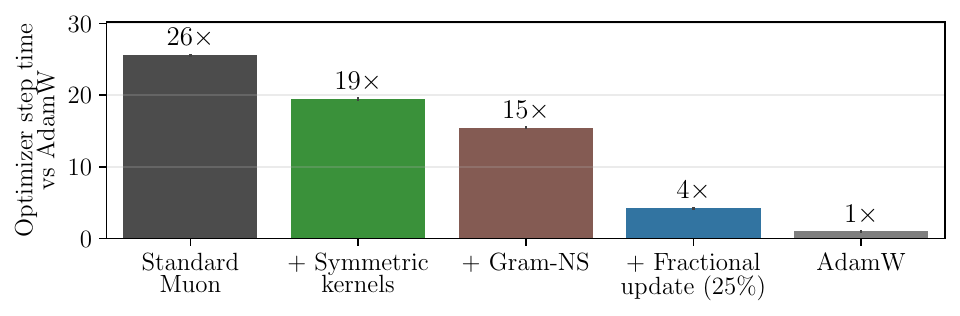}
\caption{Optimizer step time of Muon (excluding forward/backward pass) relative to AdamW for a 7B-parameter language model trained on four GH200s.
Each colored bar adds one of our contributions on top of the previous one.
Together they reduce Muon's cost from $26\times$ AdamW to just $4\times$.}
\label{fig:optstep-bar}
\end{figure}

\newpage
\setcounter{tocdepth}{2}
\tableofcontents
\newpage

\section{Introduction}
Muon is becoming the method of choice for training frontier LLMs like Kimi K2 and GLM-5 \citep{kimiteam2026kimik2openagentic,glm5team2026glm5vibecodingagentic}.
Compared to AdamW, Muon needs fewer optimizer steps to reach a given loss, but each step is more expensive.
This overhead is due to Muon's Newton-Schulz orthogonalization procedure, a cubic-time matrix operation not present in older optimizers.
As model size increases, the overhead of computing each Muon step grows rapidly.
Moreover, Muon is more difficult than traditional optimizers to parallelize, introducing additional communication overhead in distributed settings.
Thus, scaling Muon presents a range of algorithmic and systems-level challenges that diminish its effectiveness in the most demanding settings and hinder its adoption to new problems.

This paper gives a comprehensive answer to these challenges.
We provide a full-stack solution that performs well across a wide range of model sizes, architectures, cluster sizes, and parallelism strategies.
We package our contributions into a single optimizer, Dion3, built on four improvements that each reduce the cost of Muon's orthogonalization step and compound when used together:
\begin{enumerate}
  \item \textbf{Gram Newton-Schulz}, a mathematically equivalent reformulation of Newton-Schulz that iterates on the small symmetric Gram matrix and cuts the FLOP cost of each orthogonalization dramatically (\Cref{sec:gram-ns}).
  \item Custom \textbf{GPU kernels for symmetric matrix multiplication} written in CuteDSL, which speed up Gram Newton-Schulz and take maximal advantage of its symmetric structure (\Cref{sec:kernels}).
  \item A new \textbf{optimizer update rule} that subsamples the rows or columns of the momentum matrix before orthogonalizing it to make each step faster (\Cref{sec:nordion2}).
  Our update rule is simpler and faster than Dion \citep{ahn2025dion} while matching or improving the optimization quality of Muon and NorMuon.
  \item \textbf{Megabatched communication}, which eliminates a significant source of overhead by reducing the number of rounds of communication per optimizer step to a small constant.
\end{enumerate}

Our improvements combine to accelerate the optimizer step at every level of the stack.
The GPU kernels at the lowest level speed up Gram Newton-Schulz, which in turn reduces the FLOP cost of executing our optimizer's update rule.
At the top level, the compression achieved by our update rule joins with the megabatch strategy to limit the communication cost.
Each of our contributions is of independent value, but combining them gives Dion3 maximum flexibility to handle different parallelism strategies, architectures, and model sizes efficiently.
Moreover, Dion3 benefits from synergies between them that further reduce computational cost.
Gram Newton-Schulz uses more symmetric multiplications than standard Newton-Schulz does, compounding the benefits of our CuteDSL kernels.
Likewise, Gram Newton-Schulz is especially fast for matrices with highly asymmetric shapes---like those produced by our update rule's subsampling strategy.
Surprisingly, Dion3 appears to benefit the update \emph{quality} as well as the computational footprint in some settings, even with a strong optimized baseline.

To realize these gains in practice, we provide open-source implementations in the form of two interoperable packages: \texttt{gram-newton-schulz}\footnote{\url{https://github.com/Dao-AILab/gram-newton-schulz}} is a drop-in replacement for Muon's Newton-Schulz routine that also incorporates our CuteDSL kernels, while the \texttt{dion} package implements our optimizer update rule (along with baselines like Muon and NorMuon), and handles parallelism in the distributed setting (megabatching, FSDP, DDP, etc.).
In all, Dion3 allows practitioners working in a wide range of settings to realize the benefits of Muon and related optimizers with almost no overhead.

\section{The Challenge of Scaling Muon}
\subsection{Muon and NorMuon Recap}
\label{sec:muon-recap}

The Muon optimizer \citep{jordan2024muon} is best described as steepest-direction descent with respect to the spectral norm \citep{bernstein2025deriving}.
At a given training step, let $\mW \in \mathbb{R}^{n \times m}$ be a weight matrix and let $\mG$ be the gradient of the loss with respect to $\mW$.
The Muon update rule is
\begin{align}
\begin{split}\label{eq:muon-update}
\mM &\gets \mu \mM + \mG \\
\mW &\gets \mW - \eta \operatorname{polar}(\mM)
\end{split}
\end{align}
where $\mu$ is the momentum coefficient, $\eta$ is the learning rate, and $\mM$ is the momentum matrix (with $\mM_0 := \bm{0}$).
In many ways, Muon resembles basic stochastic gradient descent (SGD) with momentum.
Its key innovation is the $\operatorname{polar}$ operation, which is defined as follows:

\begin{definition}[Polar Decomposition]
If $\mX = \mU \mathbf \Sigma \mV^\top$ is the singular value decomposition (SVD) of a matrix, then $\operatorname{polar}(\mX) = \mU \mV^\top$.
\end{definition}
This operation---the ``orthogonalization step''---balances the spectrum of the update, ensuring that it has full numerical rank.

NorMuon \citep{li2025normuonmakingmuonefficient} is a popular variant of Muon that aims to combine its strengths with those of Adam.
While Muon's update has a balanced spectrum, it does not have balanced row norms; each step could greatly affect some neurons while almost neglecting others.
Following Adam, NorMuon uses the second moment of the (orthogonalized) gradients to adaptively adjust the stepsize for each neuron, helping all neurons to learn at every step.
A final rescaling ensures that the overall magnitude of the update (as measured in the Frobenius norm) matches that of Muon:
\begin{align}\label{eq:normuon-update}
\begin{split}
\mM &\gets \mu \mM + \mG, \qquad
\mO \gets \operatorname{polar}(\mM)\\
\vv_i &\gets \beta_2 \vv_i + (1 - \beta_2) \cdot \frac1m \sum\nolimits_j \mO_{ij}^2,\qquad
\widehat{\mO}_{ij} \gets \frac{\mO_{ij}}{\sqrt{\vv_i} + \epsilon}\\
\mW &\gets \mW - \eta \frac{\|\mO\|_{\mathsf F}}{\|\widehat{\mO}\|_{\mathsf F}}\widehat{\mO}
\end{split}
\end{align}
The cost of NorMuon's extra normalization steps is negligible compared to that of computing $\operatorname{polar}(\mM)$, so NorMuon's benefits come almost for free.
Our methods, which speed up the computation of $\operatorname{polar}(\mM)$, apply to both Muon and NorMuon alike.

\subsection{Standard Newton-Schulz}
The main challenge to scaling Muon is the need to compute $\operatorname{polar}(\mM)$ for each weight matrix at every step.
Since $\operatorname{polar}(\cdot)$ is expensive to compute exactly, Muon uses the Newton-Schulz method to approximate it.
Newton-Schulz is an iterative method based on matrix polynomials.
Beginning with $\mX_0$, each iteration improves the approximation $\mX_t \approx \operatorname{polar}(\mX_0)$ according to the update rule
\[
\mX_{t+1} = a_t \mX_t + b_t \mX_t \mX_t^\top \mX_t + c_t \left(\mX_t \mX_t^\top\right)^2 \mX_t.
\]
We can interpret Newton-Schulz by understanding how it affects the singular value decomposition.
Let $\mX_0 = \mU \mathbf \Sigma \mV^\top$ be the SVD, where $\mU^\top \mU = \mV^\top \mV = \mI$ and $\mathbf \Sigma$ is diagonal with positive entries called the singular values.
A direct computation shows
\[
\mX_1 = \mU \left(a_1 \mathbf \Sigma + b_1 \mathbf \Sigma^3 + c_1 \mathbf \Sigma^5 \right) \mV^\top = \mU p_1(\mathbf \Sigma) \mV^\top
\]
where $p_1(x) := a_1 x + b_1 x^3 + c_1 x^5$.
Since $\mU$ and $\mV$ have orthonormal columns and $p_1(\mathbf \Sigma)$ is diagonal, the right-hand side of this equation must be the SVD of $\mX_1$. %
By extension, $\mX_T$ also has the same singular vectors $\mU$ and $\mV$ as $\mX_0$, and its singular values have been transformed according to the composition of polynomials $(p_T \circ \cdots \circ p_1)(\mathbf \Sigma)$.
If we normalize the input matrix $\mX_0 = \mX / \|\mX\|_{\mathsf F}$, then all singular values of $\mX_0$ lie in $[0, 1]$.
\citet{jordan2024muon} identified a sequence of degree-5 odd polynomials for which $(p_T \circ \cdots \circ p_1)(x) \approx 1$ on $[0, 1]$.
Therefore,
\[ \mX_T = \mU(p_T \circ \cdots \circ p_1)(\mathbf \Sigma)\mV^\top \approx \mU \mV^\top =: \operatorname{polar}(\mX_0)\]

\begin{algorithm}[H]
\caption{Standard Newton-Schulz}
\label{alg:standard-ns}
\begin{algorithmic}[1]
\Require $\mX \in \mathbb{R}^{n \times m}$ with $n \leq m$, coefficients $\{(a_t, b_t, c_t)\}_{t=1}^5$
\State $\mX \gets \mX \,/\, (\|\mX\|_{\mathsf F} + \epsilon)$ \Comment{Normalize sing.\ vals.\ to $[0, 1]$; $\epsilon = 10^{-7}$}
\State $\mX \gets \texttt{bfloat16}(\mX)$ \Comment{Cast to half precision for speed}
\For{$t = 1, \ldots, 5$} \Comment{Apply $p_t(\mX)$}
    \State $\mA \gets \mX\mX^\top$
    \State $\mB \gets b_t \mA + c_t \mA^2$
    \State $\mX \gets a_t \mX + \mB \mX$
\EndFor
\State \Return $\mX$
\end{algorithmic}
\end{algorithm}

\Cref{alg:standard-ns} gives the standard implementation of Newton-Schulz.
We now analyze its runtime in FLOPs to help us understand its performance bottlenecks.
We count only the cubic-time matrix multiplication operations, ignoring the lower-order scalar multiplications and matrix additions.
For clarity, we let $T$ denote the number of iterations, remembering that Muon typically sets $T=5$.
We also assume without loss of generality that $n \leq m$ and define the aspect ratio $\alpha = m / n \geq 1$.
Intuitively, $\alpha$ measures how asymmetric the shape of the matrix is, with $\alpha = 1$ being square and $\alpha \gg 1$ being very asymmetric.

Each iteration has three steps.
Each step contains a single matrix multiplication ($\mX \mX^\top$, $\mA^2$, $\mB \mX$) costing, respectively, $2mn^2$, $2n^3$, and $2mn^2$ FLOPs for a total cost of $T(4mn^2 + 2n^3) = 2T(2\alpha + 1)n^3$ FLOPs.
When $T=5$, the cost is $(20\alpha + 10)n^3$ spread across 15 matrix-matrix multiplications (GEMMs).

\subsection{Scalability of Muon}
\label{sec:scalability}
While Muon comfortably outperforms traditional optimizers like Adam on a per-step basis, two factors immediately make it more difficult to scale:

\begin{itemize}[leftmargin=*, labelindent=1em]
\item \textbf{Super-linear complexity.}
SGD and Adam perform only inexpensive element-wise operations like addition and scalar multiplication; therefore, the computational complexity of these methods scales linearly with the number of parameters.
Orthogonalization is substantially more expensive; for an $n \times n$ weight matrix, it requires $O(n^3)$ time.
Although sparse MoE architectures keep most weight matrices smaller, they also reduce overall model FLOPs, increasing the relative cost of the optimizer step \citep{essential2025muon} compared to the forward and backward passes. 
Moreover, MoEs still include large weight matrices in their dense layers \citep{liu2024deepseek}.

\item \textbf{Distributed training.}
There are additional challenges when weights are sharded across GPUs, as is standard in distributed training.
Element-wise optimizers like Adam can update each shard separately, but Muon must gather the shards together to compute $\operatorname{polar}$ on the full matrix.
Early implementations of Muon duplicated the orthogonalization step across devices, but this increases its already considerable cost.
\citet{essential2025muon} propose using all-to-all communication along the sharding dimension, with each device processing a different weight in parallel.
\citet{ahn2025dion} implemented this strategy in PyTorch FSDP2, and \citet{lim2025motif2} examined its compute-communication overlap characteristics.
While this approach makes the overhead of distributed Newton-Schulz manageable in some settings, it does not fully resolve the scalability limitations.
\end{itemize}

In \Cref{sec:nordion2}, we introduce a variant of Muon that addresses these two challenges by subselecting (that is, compressing) the momentum matrix before orthogonalizing it.
\Cref{sec:megabatch} describes a megabatching strategy and a flexible, pip-installable package that gracefully handle the distributed setting.
Our other contributions are inspired by the analysis in the previous section, which reveals two shortcomings of the standard Newton-Schulz algorithm:

\begin{itemize}[leftmargin=*, labelindent=1em]
\item \textbf{Symmetric Matrix Multiplication.}
The matrices $\mA = \mX \mX^\top$ and $\mB = b_t \mA + c_t \mA^2$ computed at each iteration of Newton-Schulz are symmetric by definition, but standard Newton-Schulz does not exploit this structure.
We can compute the lower triangular part of these matrices in the usual way and simply copy the results to the upper triangular part.
This technique halves the cost of computing $\mX \mX^\top$ and $\mA^2$, giving an overall total of $T(3\alpha + 1)n^3$ FLOPs.
\Cref{sec:kernels} describes custom CuteDSL kernels that implement this technique.

\item \textbf{Dependence on Aspect Ratio.}
Newton-Schulz's runtime is dominated by the large rectangular matrix multiplications needed to compute $\mX \mX^\top$ and $\mB \mX$, which together cost $3\alpha n^3$ FLOPs per iteration even when using symmetric matrix multiplications.
A typical implementation with $T=5$ requires 10 of these expensive rectangular multiplications.
This strong dependence on $\alpha$ is unfortunate.
Most of the weight matrices in transformer architectures are rectangular, including the MLP weights, MoE weights, and attention projection weights when using GQA or MLA.
Furthermore, we observe that the latest MoE architectures are trending towards finer-grained, sparser experts, meaning that the aspect ratios of their hidden dimensions to intermediate dimensions are increasing as well \citep{kimiteam2026kimik2openagentic,guo2026sonicmoeacceleratingmoeio,yang2025qwen3technicalreport,openai2025gptoss120bgptoss20bmodel}.
Thus, at large scales, pretraining time would benefit greatly from an algorithm that uses fewer rectangular multiplications and more small symmetric ones.
We develop such an algorithm in \Cref{sec:gram-ns}.
\end{itemize}

\subsubsection{Kimi's Success}
Muon has been scaled successfully by Moonshot AI \citep{liu2025muon, kimiteam2026kimik2openagentic}.
Their success was enabled by an alignment of several factors:
\begin{enumerate}[nosep, leftmargin=*, labelindent=1em]
\item Earlier releases of PyTorch and Megatron-LM used DP-sharding strategies for optimizer states that were, by chance, favorable for Muon \citep{liu2025proof}.
Model and optimizer states were stored in large contiguous flat buffers, and data-parallel shards were produced by splitting this buffer.
As a result, only tensors crossing a DP boundary required an additional gather.
These advantageous strategies, however, have been deprecated in more recent releases.
\item They adopt a fine-grained MoE architecture with only a single dense layer (even fewer than in DeepSeek-V3 \citep{liu2024deepseek}), so most matrices remain small even at the one-trillion-parameter scale, keeping the Newton-Schulz overhead manageable.
\item Their main distributed training strategy combines pipeline parallelism with expert parallelism, which naturally distributes Muon's computation across devices with minimal communication. 
\end{enumerate} 

In sum, Muon has been successfully scaled, but its success relies on a subtle alignment of architectural, parallelism, and framework factors.  
For Muon to serve as a general-purpose replacement for Adam, it would benefit from cheaper, more flexible scaling properties that relax these constraints.
This is the goal of the present work.

\section{Comparison with Related Work}
As Muon has gained popularity, successive work has sought to improve it in several ways.
Most of these proposals (e.g.\ NorMuon) modify Muon's update rule so as to reach a given loss in fewer training steps; however, they use the same Newton-Schulz routine described above and generally suffer from the same scaling challenges.

\subsection{Improving Newton-Schulz}
A few papers \emph{have} attempted to improve the Newton-Schulz step itself---e.g., by optimizing the sequence of polynomials $(a_t, b_t, c_t)$ or the normalization step \citep{amsel2026the,grishina2026acceleratingnewtonschulziterationorthogonalization,boissin2025turbomuonacceleratingorthogonalitybasedoptimization}---but they retain the form of \cref{alg:standard-ns}.
In contrast, our Gram Newton-Schulz algorithm departs from this form.
Since its output is still mathematically identical to the standard version, it remains compatible with nearly all varieties of Muon, including prior improvements to Newton-Schulz.

Gram Newton-Schulz is closely akin to a method proposed in Appendix J of \citet{amsel2026the}.
Both aim to reduce the FLOP cost of Newton-Schulz, both form the Gram matrix to reduce the number of $m \times n$ matrix multiplications, and both are mathematically identical to standard Newton-Schulz.
However, our work supersedes \citet{amsel2026the} in several ways.
First, the precise formulas of Gram Newton-Schulz are different and, we believe, more stable. 
Second, we use symmetric matrix multiplication kernels; the opportunity to use these kernels more is an essential advantage of Gram Newton-Schulz not studied previously.
Third, we undertake a thorough stability analysis and provide practical recommendations that allow Gram Newton-Schulz to be used in practice with minimal ad-hoc hyperparameter tuning.
The iterative part of the method, which approximates the inverse square root of the Gram matrix $\mQ_T \approx \mR_0^{-1/2} := (\mX\mX^\top)^{-1/2}$, generalizes and speeds up the method of \citet{lakic} (see also \citet[Eq.~7.18]{doi:10.1137/1.9780898717778}), but Laki\'c does not consider the symmetric kernels, the stability issues, or the application to $\operatorname{polar}(\mX)$.

The idea of exploiting symmetry to reduce the arithmetic cost of computing $\mA^\top \mA$ has appeared before, both in standard linear algebra packages like BLAS and in the context of Muon's Newton-Schulz routine \citep{newhouse2024faster,modded_nanogpt_pr109,lin2025flash}.
However, our results show that the true potential of this trick is only realized when combined with Gram Newton-Schulz, which requires more general symmetric operations like $\mA \mB + \beta \mC$.
Furthermore, our symmetric matrix multiplication kernels are written in CuteDSL, exploiting the advanced features of NVIDIA's Hopper and Blackwell GPUs.

\subsection{Orthogonalizing a Smaller Matrix}
Our update rule is part of a second line of work that reduces the cost of the Newton-Schulz step by shrinking the size of its input, an approach pioneered by Dion \citep{ahn2025dion}.
Dion constructs a low-rank approximation of the momentum matrix and orthogonalizes only this approximation.
If $\mM \approx \mM \widehat{\mV} \cdot \widehat{\mV}^\top$ is a rank-$k$ approximation with $\widehat{\mV}^\top \widehat{\mV} = \mI \in \R^{k \times k}$, then
\begin{equation}\label{eq:low-rank-polar}
\operatorname{polar}(\mM) \approx \operatorname{polar}(\mM \widehat{\mV} \widehat{\mV}^\top) = \operatorname{polar}(\mM \widehat{\mV}) \widehat{\mV}^\top.
\end{equation}
Because the dimensions of $\mM \widehat{\mV}$ are much smaller than those of $\mM$, this approximation greatly reduces the runtime of Newton-Schulz.
In Dion, $\widehat{\mV}$ is found by a warm-started power-iteration procedure. %
If $\widehat{\mV}$ spans the top-$k$ right singular vectors of $\mM$, then this approximation gives the optimal rank-$k$ approximation of $\operatorname{polar}(\mM)$.
\citet{ahn2025dion} found that approximating $\widehat{\mV}$ from only a single warm-started power-iteration suffices for good downstream performance.
A key element of Dion's success is \emph{error feedback}, a technique that helps offset the error introduced by low-rank approximation in future iterations.
We adopt error feedback and describe it fully in \Cref{sec:nordion2}.

\textsc{Trion} \citep{modoranu2026trion} adopts the main ideas of \textsc{Dion}, but changes how the low-rank approximation is computed.
Instead of constructing $\widehat{\mV}$ via power iteration, it selects $k$ columns from the discrete cosine transform matrix.
Surprisingly, this simpler approximation performs even better than \textsc{Dion}, suggesting that finding a \emph{good} low-rank approximation is not essential; the error-feedback mechanism compensates even for large differences between $\mM$ and $\mM \widehat{\mV} \widehat{\mV}^\top$.

We push this simplification further by using the simplest possible ``low-rank approximation'' of $\mM$: we select $k$ of its rows or columns and do not operate on the others.
This achieves the same savings in the runtime of Newton-Schulz as Dion and Trion, but makes all the other steps of the algorithm (construction of $\mV$, error feedback, etc.) much easier to implement, especially in the distributed setting.

Besides low-rank approximation, block orthogonalization is an alternative way to shrink the size of the input to Newton-Schulz.
This approach partitions the momentum matrix into blocks---each typically corresponding to a shard in weight-sharded settings---and orthogonalizes each block separately.
Several papers have proposed versions of this approach \citep{boreiko2025towards,khaled2026muonbp,tang2026hierarchicalmuontilednewtonschulz}.
In particular, \citet{khaled2026muonbp} found success by periodically alternating block-wise steps with full steps, a method they call \textsc{MuonBP}.

Both low-rank and block orthogonalization methods use Newton-Schulz as a black box.
Therefore, they are trivial to combine with improvements to Newton-Schulz.

\section{Gram Newton-Schulz}
\label{sec:gram-ns}
Our first contribution is an orthogonalization algorithm that uses fewer rectangular matrix multiplications than standard Newton-Schulz.
Instead of iterating directly on the large input matrix $\mX$, we iterate on the small symmetric Gram matrix $\mX \mX^\top$.
The output of our algorithm is mathematically identical to that of standard Newton-Schulz, but it is significantly cheaper to compute.

At a high level, our strategy is based on the following formula.
If $\mX \in \mathbb{R}^{n \times m}$ with $n \leq m$, then $\mathrm{polar}(\mX) = (\mX \mX^\top)^{-1/2} \mX$.
Rather than use an iterative method to approximate $\mX_T \approx \mathrm{polar}(\mX)$ directly, we instead:
\begin{enumerate}
\item Compute the $n \times n$ Gram matrix $\mX \mX^\top$
\item Use an iterative method to approximate $\mQ_T \approx (\mX \mX^\top)^{-1/2}$
\item Output $\mQ_T \mX$
\end{enumerate}

Step 2---which comprises almost all of the algorithm's wall clock runtime and FLOP cost---works entirely with small $n \times n$ symmetric matrices.
This version uses just two rectangular matrix multiplications: $\mX \mX^\top$ in the beginning and $\mQ_T \mX$ at the end.
It also synergizes well with our symmetric GEMM kernels (\Cref{sec:kernels}).
Because we use more symmetric multiplications than before, these kernels provide an even greater speedup.
Since our algorithm works on the $n \times n$ Gram matrix of $\mX$, we call it ``Gram Newton-Schulz''.

How can we ensure that the output matches that of standard Newton-Schulz?
Recall that Newton-Schulz outputs $(p_T \circ \cdots \circ p_1)(\mX)$, where each $p_t$ is an odd polynomial $p(x) = ax + bx^3 + cx^5$.
Any odd polynomial can be rewritten in the form $p(x) = xh(x^2)$, where $h$ is a lower-degree polynomial with the same coefficients, like $h(x) = a + bx + cx^2$.
Intuitively, if $p(x) \approx 1$, then $h(y) = p(y^{1/2})y^{-1/2} \approx y^{-1/2}$, so the Newton-Schulz polynomials implicitly provide a way to approximate inverse square roots.
If we use this approximation in step 2 of Gram Newton-Schulz, then its output will match that of standard Newton-Schulz.

Formally, Gram Newton-Schulz is based on the following theorem.
In effect, it shows how to compute $\mX_T$ from $\mX_0$ without ever constructing the intermediate values $\mX_1, \ldots, \mX_{T-1}$:
\begin{theorem}\label{thm:gram-ns}
If $p_t(x) = xh_t(x^2)$ for all $t \in \{1, \ldots, T\}$, then $(p_T \circ \cdots \circ p_1)(x) = q_T x$, where $q_T$ is defined by the iteration $r_0 = x^2,\,\,q_0 = 1,$ and
\begin{align*}
z_t = h_t(r_{t-1}),\qquad\qquad
r_t = r_{t-1}z_t^2,\qquad\qquad
q_t = q_{t-1}z_t
\end{align*}
for all $t \in \{1, \ldots, T\}$.
\end{theorem}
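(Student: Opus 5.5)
The plan is to prove a stronger statement by induction on $t$, tracking the intermediate Newton-Schulz iterates. Define $x_0 = x$ and $x_t = p_t(x_{t-1})$, so that $x_T = (p_T \circ \cdots \circ p_1)(x)$. The invariant to establish for every $t \in \{0, 1, \ldots, T\}$ is
\[
x_t = q_t x \qquad\text{and}\qquad r_t = x_t^2 .
\]
The theorem is the first half of this invariant at $t = T$.

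\textbf{Base case.} For $t = 0$ we have $q_0 x = x = x_0$ and $r_0 = x^2 = x_0^2$, directly from the initialization in the statement.

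\textbf{Inductive step.} Assume the invariant holds at $t-1$. By the hypothesis $p_t(y) = y\,h_t(y^2)$,
\[
x_t = p_t(x_{t-1}) = x_{t-1}\, h_t(x_{t-1}^2) = x_{t-1}\, h_t(r_{t-1}) = x_{t-1} z_t ,
\]
where the middle equality uses $r_{t-1} = x_{t-1}^2$. Substituting $x_{t-1} = q_{t-1} x$ gives $x_t = q_{t-1} z_t x = q_t x$. Squaring gives $x_t^2 = x_{t-1}^2 z_t^2 = r_{t-1} z_t^2 = r_t$. This closes the induction.

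The only point needing care is keeping track of what $r_t$ represents. The recursion $r_t = r_{t-1} z_t^2$ looks arbitrary until one recognizes that $r_t$ is just $x_t^2$ written without reference to $x_t$. Strengthening the inductive hypothesis to include $r_t = x_t^2$ is the key choice; with it, every step is a one-line substitution.

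Since everything here is a scalar polynomial identity, commutativity is automatic. For the matrix application, one would add that all the quantities $r_t, z_t, q_t$ are polynomials in $\mX\mX^\top$ and so commute with one another. Applied through the SVD, the identity then lifts to $\mX_T = \mQ_T \mX$.
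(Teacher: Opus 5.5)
Your proof is correct and is essentially the paper's argument: the same induction on the iterates $x_t = p_t(x_{t-1})$ with the strengthened invariant $r_t = x_t^2$. The only difference is that you carry $x_t = q_t x$ where the paper carries $q_t = x_t/x_0$; your form avoids dividing by $x_0$, so it also covers $x = 0$.
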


\begin{proof}
Define $x_0 = x$ and $x_t = p_t(x_{t-1})$ for $t \in \{1, \ldots, T\}$.
We will show by induction that $r_t = x_t^2$ and $q_t = x_t / x_0$ for all $t$.
The base case $t = 0$ holds by the definition $r_0 = x^2, q_0 = 1$.
Now assume the hypothesis holds for $t-1$.
By assumption,
\[
x_t = p_t(x_{t-1}) = x_{t-1} h_t(x_{t-1}^2)
\]
By the inductive hypothesis, $h_t(x_{t-1}^2) = h_t(r_{t-1}) = z_t$, so $x_t = x_{t-1} z_t$.
Squaring both sides,
\[
x_t^2 = x_{t-1}^2 z_t^2 = r_{t-1} z_t^2 = r_t
\]
If we instead divide both sides by $x_0$ and apply the other part of the inductive hypothesis, we get
\[
\frac{x_t}{x_0} = \frac{x_{t-1}}{x_0}z_t = q_{t-1} z_t = q_t
\]
Thus, both parts of the hypothesis hold for $t$.
Finally, conclude $(p_T \circ \cdots \circ p_1)(x) = x_T = q_T x_0$.
\end{proof}

Note that, as an immediate corollary of the proof, $q_t = x_t / x_0 \to 1/x_0 = \left(x_0^2\right)^{-1/2}$.
In effect, this shows that $\mQ_T \to (\mX \mX^\top)^{-1/2}$.

To obtain our initial version of Gram Newton-Schulz, we simply lift the iteration from \Cref{thm:gram-ns} to matrices.
As in standard Newton-Schulz, each matrix operation preserves singular vectors.
Therefore, each singular value of $\mR_t$, $\mQ_t$, and $\mZ_t$ evolves independently of the others according to the scalar iteration described above.
Note that while this algorithm is mathematically equivalent to standard Newton-Schulz, it is not yet practical due to numerical instability.
The only difference between our proposed method (\Cref{alg:stable-gram-ns}) and this naive version is the presence of what we call a ``restart'' at the beginning of iteration 3 of the loop.
We will motivate this change below (\Cref{sec:stable}).

\begin{algorithm}[H]
\caption{Naive Gram Newton-Schulz}
\label{alg:naive-gram-ns}
\begin{algorithmic}[1]
\Require $\mX \in \mathbb{R}^{n \times m}$ with $n \leq m$, coefficients $\{(a_t, b_t, c_t)\}_{t=1}^5$
\State $\mX \gets \mX \,/\, (\lVert\mX\rVert_{\mathsf{F}} + \epsilon)$ \Comment{Normalize sing vals to $[0, 1]$, $\epsilon = 10^{-7}$}
\State $\mR_0 = \mX \mX^\top$
\State $\mQ_0 = \mI$
\For{$t = 1, \ldots, 5$}
    \State $\mZ_t \gets a_t\mI + b_t \mR_{t-1} + c_t \mR_{t-1}^2$ \Comment{Apply $h_t(\mR_{t-1})$}
    \State $\mQ_t \gets \mQ_{t-1} \mZ_t$
    \State $\mR_t \gets \mZ_t \mR_{t-1} \mZ_t$
\EndFor
\State \Return $\mQ_5 \mX$
\end{algorithmic}
\end{algorithm}

\subsection{Runtime of Naive Gram Newton-Schulz}
We now calculate the FLOP count of this new algorithm to show how its runtime improves on standard Newton-Schulz.
There are four matrix multiplications per iteration; if we use our symmetric GEMM kernel, these cost $n^3$ FLOPs each. 
The initialization ($\mX \mX^\top$) and output ($\mQ_5 \mX$) steps cost $mn^2$ and $2mn^2$, respectively, since $\mQ_5 \mX$ is not symmetric.
Computing $\mQ_1 = \mQ_0 \mZ_1$ is free since $\mQ_0 = \mI$, and we can skip computing $\mR_5 = \mZ_5 \mR_4 \mZ_5$; together this saves $3n^3$ FLOPs.
Thus, the total cost is $T\cdot4n^3 + 3mn^2 - 3n^3 = (4T + 3\alpha - 3)n^3$ FLOPs.

Compare this to standard Newton-Schulz's $T(3\alpha + 1)n^3$ FLOPs when using symmetric GEMMs.
When $\alpha = 1$, they are equal.
When $\alpha > 1$, Gram Newton-Schulz is cheaper, often significantly so.
For a typical Muon application ($T=5, \alpha = 4$\footnote{Transformers' MLP blocks typically have an intermediate dimension $4\times$ the model's hidden dimension.}), it saves \textbf{55\%} of the FLOPs used by standard Newton-Schulz with symmetric GEMMs, or \textbf{68\%} compared to a typical implementation without symmetric GEMMs.
For larger $T$ and $\alpha$, the savings are even greater, as the leading order term is $O((T+\alpha)n^3)$ instead of $O(T\alpha n^3)$.

In practice, when $\alpha=1$, we fall back to standard Newton-Schulz with our symmetric GEMMs (\Cref{sec:kernel-optimizations-standard-ns}), since it launches fewer GEMMs and has a faster wall clock time.

\subsection{Stabilizing Gram Newton-Schulz}
\label{sec:stable}

As written, \cref{alg:naive-gram-ns} destabilizes Muon and can even diverge.
The leading cause of this instability is the introduction of spurious negative eigenvalues in the Gram matrix $\mX \mX^\top$ due to large rounding errors in half-precision arithmetic.
These negative eigenvalues theoretically should not exist in a Gram matrix like $\mX \mX^\top$.
The main loop of Gram Newton-Schulz approximates inverse square root of positive numbers, but it diverges for negative inputs.
We provide a thorough theoretical analysis and numerical experiments showing the impact of spurious negative eigenvalues in the Gram matrix in \Cref{sec:instability-naive-gram-ns}.

In practice, we can fully mitigate this instability using a restarting strategy.
Instead of running all five iterations of the loop in a single pass and outputting $\mQ_5 \mX$, we run only the first two iterations and compute $\mX_2 = \mQ_2 \mX$.
We then ``restart'' the algorithm treating $\mX_2$ as the new input; we construct the Gram matrix $\mX_2 \mX_2^\top$, initialize $\mQ_2 = \mI$, and proceed with the remaining three iterations of the loop.
This resets any spurious negative eigenvalues to near zero and restores commutativity of $\mX$, $\mQ$, and $\mR$ at the cost of $3(\alpha - 1)n^3$ FLOPs.
We find this strategy is sufficient to preserve training quality.

To select the best iteration after which to restart, we sweep for the restart location that provides the best bound on the condition number of $\mQ_t$, assuming that forming the Gram matrix introduces spurious eigenvalues as negative as $-4\cdot10^{-4}$.
We walk through an example of how to automate this sweep in \Cref{sec:stabilizing-gram-ns}.

\Cref{alg:stable-gram-ns} presents our stabilized, training-ready version of Gram Newton-Schulz, which uses the restart strategy as well as a slight reformulation of the intermediate polynomials and a switch to \texttt{float16} instead of \texttt{bfloat16}.
The latter two changes are motivated in \Cref{sec:computing-matrix-quadratics,sec:float16-vs-bfloat16}.
Our algorithm is implemented in a pip-installable package called \href{https://github.com/Dao-AILab/gram-newton-schulz}{\texttt{gram-newton-schulz}}, which uses the Polar Express coefficients  \citep{amsel2026the}.

\begin{algorithm}[H]
\caption{Stabilized Gram Newton-Schulz}
\label{alg:stable-gram-ns}
\begin{algorithmic}[1]
\Require $\mX \in \mathbb{R}^{n \times m}$ with $n \leq m$, coefficients $\{(a_t, b_t, c_t)\}_{t=1}^5$
\State $\mX \gets \mX \,/\, (\lVert\mX\rVert_{\mathsf{F}} + \epsilon)$ \Comment{Normalize sing vals to $[0, 1]$, $\epsilon = 10^{-7}$}
\State $\mX \gets \texttt{float16}(\mX)$ \Comment{Cast to half precision for speed}
\State $\mR_{0} \gets \mX \mX^\top$
\State $\mQ_{0} \gets \mI$
\For{$t = 1, \ldots, 5$}
    \If{$t = 3$} \Comment{Restart to stabilize}
        \State $\mX \gets \mQ_2 \mX$
        \State $\mR_2 \gets \mX \mX^\top$
        \State $\mQ_2 \gets \mI$
    \EndIf
    \State $\mZ_t \gets b_t \mR_{t-1} + c_t \mR_{t-1}^2$
    \State $\mQ_t \gets \mQ_{t-1} \mZ_t +  a_t\mQ_{t-1}$ \Comment{$\mQ_t = \mQ_{t-1} h_t(\mR_{t-1})$}
    \State $(\mathbf{RZ})_t \gets \mR_{t-1} \mZ_t + a_t\mR_{t-1}$
    \State $\mR_t \gets \mZ_t (\mathbf{RZ})_t + a_t(\mathbf{RZ})_t$ \Comment{$\mR_t = \mR_{t-1} h_t(\mR_{t-1})^2$}
\EndFor
\State $\mX \gets \mQ_5 \mX$
\State \Return $\mX$
\end{algorithmic}
\end{algorithm}

\section{Symmetric GEMM Kernels in CuteDSL}
\label{sec:kernels}
Our second contribution is a set of custom GPU kernels for the operations $\mA \mB$ and $\alpha \mA \mB + \beta \mC$ that assume $\mA \mB$ and $\mC$ are symmetric.
These \emph{symmetric GEMM} kernels compute the lower triangle of the output matrix in the usual way and copy the results to the upper triangle (\Cref{fig:gemm-benchmarks}, left), saving about half the floating point operations used by standard matrix multiplication routines.
Our kernels allow us to take advantage of the symmetric structure of Gram Newton-Schulz.
As noted above, they also accelerate standard Newton-Schulz, but their impact is far greater when combined with Gram Newton-Schulz.
We target the Hopper and Blackwell GPU architectures.
\Cref{fig:gemm-benchmarks} (right) shows that our kernels achieve superb performance on both architectures, significantly outperforming the standard GEMM routine from cuBLAS across a range of matrix sizes.

\begin{figure}[h]
\centering
\includegraphics[width=0.435\textwidth]{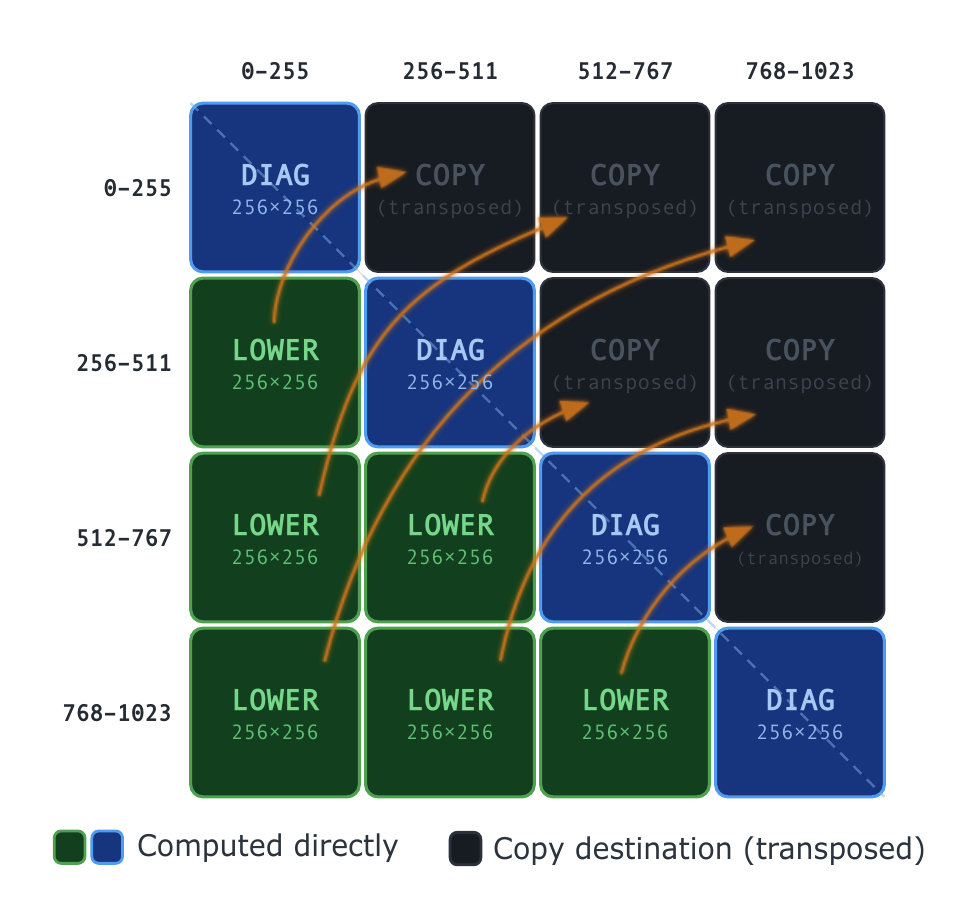}
\hfill
\includegraphics[width=0.53\textwidth]{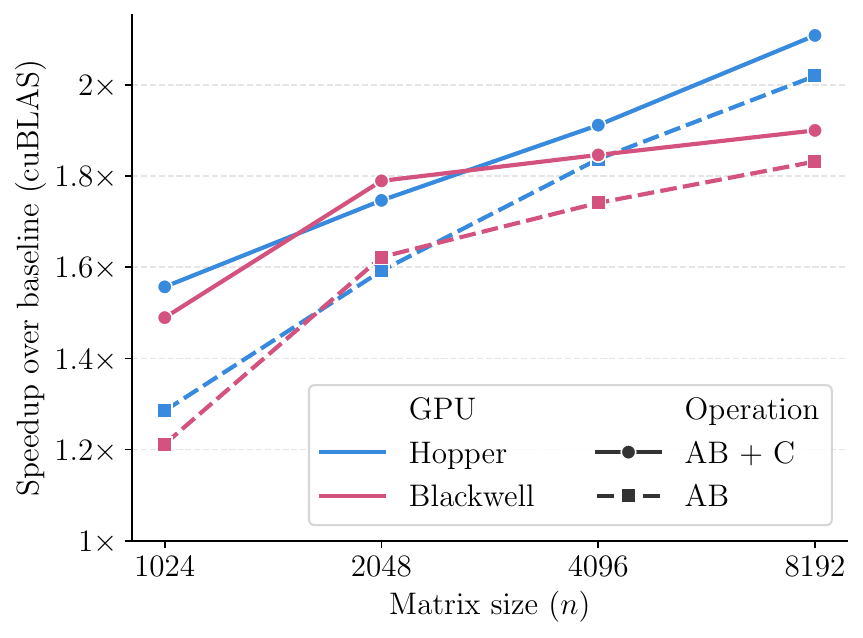}
\caption{\textbf{Left:} Symmetric GEMM computes $256 \times 256$ tiles from the lower triangle and main diagonal, then transposes and copies each lower tile to the corresponding upper tile.
\textbf{Right:} Our CuteDSL symmetric GEMM kernels benchmarked against cuBLAS GEMM kernels on Hopper and Blackwell GPUs.
Input matrices $\mA, \mB, \mC$ have dimensions $n \times n$.
For large enough $n$, our kernels achieve a $\sim 2\times$ speedup over cuBLAS, both with and without an epilogue addition of $\mC$.}
\label{fig:gemm-benchmarks}
\end{figure}

Most GEMM kernels have the following form:
\begin{enumerate}
\item Scheduler: The output matrix is divided into tiles.
A schedule is created that assigns each tile to a group of workers.
\item Computing each output tile of $\mA \mB$ or $\alpha \mA \mB + \beta \mC$ requires three steps:
\begin{enumerate}
    \item Prologue: The rows of $\mA$ and columns of $\mB$ needed for the current tile are loaded in from general memory (high-bandwidth memory) to shared memory (SRAM).
    \item Matrix-Multiply Accumulate (MMA): The rows and columns are multiplied and written to the register file (Hopper) or tensor memory (Blackwell).
    \item Epilogue: Additional tensors needed for fused operations (e.g.\ $\mC$, $\alpha$, $\beta$) are loaded, fused operations are executed, and the final output is written from the register file to shared memory and then to general memory.
\end{enumerate}
\end{enumerate}

Our symmetric GEMM kernels differ from the standard one only in their schedulers and epilogues.

\paragraph{Triangular Scheduler}
In the standard GEMM, the entire output matrix is partitioned into work tiles that are load balanced and evenly partitioned amongst clusters of thread blocks, where thread blocks in the same cluster can access the same shared memory and are therefore scheduled to run together.
Each cluster then computes its assigned work tiles in succession.
Our tile scheduler in the symmetric GEMM is almost identical, except that it only partitions work tiles from the lower triangle of the matrix (including the main diagonal); work tiles in the upper triangle are unassigned.
This \emph{triangular scheduler} ensures that clusters are load balanced and no unnecessary work is performed.

\paragraph{Epilogue: Writing to the Transposed Tile}
In the GEMM epilogue, when the computed values of the lower triangle (excluding the main diagonal) are written to their assigned tile in general memory (HBM), they are also written to their transposed tile location in the upper triangle.
The left panel of \Cref{fig:gemm-benchmarks} illustrates this process.

We describe the details of our symmetric kernels and of our kernelized implementation of standard Newton-Schulz in \Cref{app:kernel-details}.

\section{The Dion3 Update Rule}
\label{sec:nordion2}
Our third contribution is a variant of Muon that avoids orthogonalizing the full momentum matrix, making the optimizer even faster.
At each step, we select only a fraction of the rows from the momentum matrix and perform the Muon update (including the orthogonalization) as if the other rows did not exist.
In words, the main steps of the Dion3 update rule are as follows:
\begin{enumerate}
    \item \textbf{Select} a fraction of the rows (or columns) of the momentum matrix.
    \item \textbf{Orthogonalize} the selected submatrix using Gram Newton-Schulz.
    \item \textbf{Update} the selected rows of the weight matrix using the output of the previous step.
    Do not update the other rows.
    \item \textbf{Error Feedback} Decay \emph{only} the selected rows of the momentum matrix by a multiplicative factor.
\end{enumerate}
Unlike our other contributions, this procedure changes the optimization trajectory, but experiments in \Cref{sec:exp-nordion2} show that this change is benign.
When \emph{all} rows are selected (i.e., the fraction equals 1), our update rule reduces to Muon in exact arithmetic.
Note that an earlier version of this update rule appeared in an unpublished report under the name Dion2.
As it was not formally published, we drop the old name and present it here as part of Dion3.
We now describe each step of the optimizer in detail.
\Cref{alg:nordion2} gives the full pseudocode.

\begin{algorithm}[H]
\caption{Dion3 update rule (single weight matrix)}
\label{alg:nordion2}
\begin{algorithmic}[1]
\Require Weight $\mW \in \mathbb{R}^{n \times m}$, gradient $\mG$, momentum buffer $\mM$, row-wise 2\textsuperscript{nd} moment buffer $\vv \in \mathbb{R}^{n}$, fraction $f \in (0, 1]$, momentum $\mu$, decay $\beta_2$, learning rate $\eta$.
\State $\mM \gets \mM + \mG$ %
\State $\mathcal{S} \gets$ indices of the $k = \lceil f n \rceil$ rows of $\mM$ with largest $\ell_1$ norm \Comment{Selection}
\State $\mO \gets \operatorname{polar}(\mM[\mathcal{S},:])$ \Comment{Orthogonalize $k \times m$ submatrix}
\State $\vv_i \gets \beta_2 \vv_i + (1-\beta_2) \tfrac{1}{m}\sum_j \mO_{ij}^2$ \quad for $i \in \mathcal{S}$ \Comment{Optional: NorMuon steps...}
\State $\widehat{\mO}_{ij} \gets \mO_{ij} / (\sqrt{\vv_i} + \epsilon)$
\State $\widehat{\mO} \gets \widehat{\mO}\,\lVert\mO\rVert_{\mathsf F}/\lVert\widehat{\mO}\rVert_{\mathsf F}$
\State $\mW \gets (1 - \eta \cdot \texttt{wd})\mW$ \Comment{Optional: weight decay}\label{step:weight-decay}
\State $\mW_{ij} \gets \mW_{ij} - \eta\,\widehat{\mO}_{ij}$\quad for $i \in \mathcal S$ \Comment{Update selected rows}\label{step:weight-update}
\State $\mM_{ij} \gets \mu\,\mM_{ij}$ \quad for $i \in \mathcal S$\Comment{Error Feedback}
\end{algorithmic}
\end{algorithm}

\paragraph{Selection}
Our update rule introduces a new hyperparameter $f \in (0, 1]$ that controls the fraction of rows (or columns) to select.
We think of $f$ as a compression factor; $f = 1$ corresponds to Muon, and decreasing $f$ speeds up the algorithm.
We recommend $f = \nicefrac14$ or $f = \nicefrac18$.
We could subselect either the rows or the columns.
When the weight matrix is sharded across one of these dimensions, we subselect along that same dimension.
Otherwise, we pick whichever dimension is smaller.
For brevity, we refer only to row selection throughout this section.

We use a simple selection strategy: we pick the $k = \lceil f n \rceil$ rows of largest $\ell_1$ norm.
Initial experiments showed that optimization quality when using random selection was not much worse, so other selection strategies may perform well.
In the distributed setting, selecting the top rows across all shards would require an extra synchronization round and a ragged all-to-all, so we simply select the top-$f$ fraction of rows from each shard.
True global selection is available as an option in our package.

\paragraph{Orthogonalization}
The benefit of our selection strategy is realized in this step.
For an input of size $n \times m$, each matrix multiplication in Newton-Schulz costs either $mn^2$ or $n^3$ FLOPs with our symmetric kernels.
Selection reduces the dimensions to $fn \times m$, slashing the cost of these multiplications by a factor of at least $1/f^2$.
This benefit is compounded by using Gram Newton-Schulz, which is faster than standard Newton-Schulz precisely when the aspect ratio $\alpha = m / n$ is large.
Happily, selection increases $\alpha$ by a factor of $1/f$.

Our optimizer also benefits the \emph{communication} cost of Muon in the distributed setting.
Rather than assemble the entire momentum matrix on a single device to perform the orthogonalization, we only assemble the selected rows, reducing the communication volume by a factor of $1/f$. 
We describe our communication strategy in detail in \Cref{sec:megabatch}.

\paragraph{Weight Update}
We experiment with two flavors of the update rule: Muon and NorMuon (corresponding to \eqref{eq:muon-update} and \eqref{eq:normuon-update}, respectively).
For the Muon flavor, we use each row of the orthogonalized submatrix $\mO = \operatorname{polar}(\mM[\mathcal S, :])$ to update the matching row of the weight matrix: $\mW[\mathcal S, :] \gets \mW[\mathcal S, :] - \eta\mO$.
The NorMuon flavor is similar, but with extra steps that update the row-wise second moments and use them to rescale $\mO$; see \Cref{alg:nordion2}.
In both flavors, if weight decay is enabled, it is applied to all rows of $\mW$ before updating the selected rows.

Our initial implementation of Dion3 suffered from a nefarious numerical issue that caused it to underperform Muon and NorMuon in final loss.
Normally, the compiler fuses the weight update into a single operation $\mW \gets (1 - \eta\cdot\texttt{wd})\mW - \eta\mO$.
Tensors are cast from \texttt{bfloat16} to \texttt{float32}, the arithmetic is performed, and the result is cast back to \texttt{bfloat16}.
When we introduced the row selection operator, it broke this automatic fusion, creating multiple rounds of upcasting and downcasting.
To restore the original numerical behavior, we implemented a custom Triton kernel for \Cref{step:weight-decay,step:weight-update} of \Cref{alg:nordion2}.
Furthermore, we found that NorMuon's normalization step should be performed in \texttt{float32} to avoid similar issues.

\paragraph{Error Feedback}
We adapt the \emph{error feedback} mechanism from Dion \citep{ahn2025dion} to our simpler selection-based strategy for approximating $\mM$.
The usual rule for updating the momentum matrix is $\mM \gets \mM + \mG;\, \mM \gets \mu \mM$, which ensures that $\mM$ is an exponential moving average of the gradients across iterations.
The main idea of error feedback is to decay only the component of $\mM$ that was captured by our approximation.
Decompose $\mM$ into two parts, the selected part $\widehat{\mM}$---which matches $\mM$ at the selected rows and contains zeros in all other rows---and the unselected part $\mM - \widehat{\mM}$.
(In Dion, $\widehat{\mM}$ would be a low-rank approximation of $\mM$.)
Instead of the usual rule $\mM \gets \mu \mM$, error feedback does $\mM \gets \mu \widehat{\mM} + (\mM - \widehat{\mM})$.
That is, it decays the selected rows by a factor of $\mu$ but leaves the other rows alone.

In effect, error feedback adds an extra $(1 - \mu)(\mM - \widehat{\mM})$ to the momentum.
By boosting the residual component, this extra term nudges future iterations to select rows that were ignored by the current iteration.
Even if a given row of $\mG$ is consistently small, error feedback allows it to build up in $\mM$ over many iterations, so it eventually gets selected and participates in the update.
When the approximation is exact ($\mM = \widehat{\mM}$), we recover standard momentum.

\paragraph{Finite Precision}
At $f = 1$ every row is selected each step, and Dion3 reduces to Muon/NorMuon up to four implementation details: the selection step permutes the rows, momentum damping is applied after forming the update\footnote{That is, Newton-Schulz sees $\mM + \mG$ rather than $\mu\mM + \mG$. 
Since $\mM$ is initialized to zero, these matrices differ by a constant factor of $\mu$, which is removed by orthogonalization anyway.}, the NorMuon normalization steps run in \texttt{float32}, and the update step uses our custom Triton kernel.
We confirmed that, despite these differences, the convergence curve of our optimizer matches that of Muon / NorMuon almost exactly (see \Cref{app:nordion2-ablations}).

\paragraph{Kernel launch minimization}
Dion3's row selection process necessarily requires more kernel launches than standard Muon does.
At small scale, the cost of these extra kernel launches can dominate.
Since the kernel dependency structure is the same from round to round, we use a standard CUDA graph capture and replay to avoid this overhead and realize the full potential of our optimizer on smaller-scale models.

\section{Megabatching and Communication}
\label{sec:megabatch}
Our final contribution concerns the distributed setting.
Under fully-sharded data parallelism (FSDP), each momentum matrix is partitioned across GPUs, so it must first be assembled onto a single device before Newton-Schulz can run.
This requires an all-to-all communication along the sharded dimension and a second all-to-all to scatter back the result.
As discussed in \Cref{sec:scalability}, the cost of these operations is a major obstacle to scaling Muon \citep{essential2025muon, ahn2025dion}.
To help users easily adopt Dion3 in a wide range of scenarios, we implement it in \texttt{dion}, a pip-installable package that gracefully manages this communication for FSDP2, DDP, and mixed sharding strategies with minimal overhead.
The package supports our kernelized implementation of Gram Newton-Schulz and both flavors of the Dion3 update rule as well as Muon, NorMuon, and standard Newton-Schulz, all of which share the same distributed backend.
We believe \texttt{dion} provides a full-stack, production-ready solution for distributed orthogonal optimization.

A key feature of our implementation is \textbf{megabatching}.
While the update rule of \Cref{sec:nordion2} reduces the communication \emph{volume} (the number of bits that must be assembled and scattered), megabatching reduces the number of \emph{rounds} of communication.
A naive implementation of Muon performs the update in batches of \texttt{world\_size} matrices at a time; each matrix in the batch is assembled and orthogonalized in parallel on a different GPU without any GPU sitting idle.
For a model with $N$ orthogonalized matrices, this requires $O(N / \texttt{world\_size})$ rounds of communication (all-to-alls) per optimizer step.
Unfortunately, launching and synchronizing each round incurs overhead.
Since this overhead is independent of the communication volume, our update rule does not mitigate it.
Furthermore, each peer-to-peer message is small---just one shard of one parameter.
These messages may fail to saturate the interconnect, so bandwidth utilization is poor.
We demonstrate these effects in \Cref{app:a2a-microbench}.

Our solution is megabatching; we group all matrices of the same shape into a \emph{single} batch.
For a given shape, the local momentum shards are packed into one all-to-all, assembled, orthogonalized as a batch, and scattered back together.
Transformers contain only a handful of distinct weight shapes, so megabatching reduces the number of communication rounds to $O(1)$, independent of model depth.
We implement this megabatching strategy in the \texttt{dion} package, where it powers Muon, NorMuon, and Dion3 alike.
Note that megabatching only changes \emph{when} data moves; the assembly, momentum-state layout, checkpoint format, and orthogonalization math are all the same.

\paragraph{Benchmarking}
We compare megabatching against the previous communication strategy by measuring the wall-clock time of the optimizer step.
We hold the model, data, and optimizer (Muon) fixed, and time both strategies back-to-back on the same GPUs.
We test two model sizes (1B and 14B parameters) and two FSDP configurations (8 or 32 GPUs).
Results are shown in \Cref{tab:megabatch}.
For the 14B model, the computational cost of Newton-Schulz dominates, so reducing communication overhead has little impact.
For the 1B model with 32 shards, each rank holds only one or two matrices of each shape anyway (see \Cref{tab:app-arch}), so the batch size is about the same for both strategies.
However, for the 1B model with 8 shards, the optimizer is communication-bound and each rank holds many matrices.
Here, megabatching has a major impact, reducing step time by $35\%$.

\begin{table}[h]
\centering
\caption{Effect of megabatched communication on the per-GPU optimizer step time (Muon, median over steps). ``Batching'' orthogonalizes matrices in groups whose size equals the shard count; ``megabatch'' uses a single batch per shape group. When the weights are small (so Newton-Schulz is cheap) and each rank holds several matrices, megabatching substantially reduces optimizer time. Each node has four GH200s.}
\label{tab:megabatch}
\begin{tabular}{@{}rrrrrr@{}}
\toprule
Model size & Nodes & FSDP Shards & Batching (ms) &  Megabatching (ms) & Change \\
\midrule
1B  & 1 & 8 & $80.7$ & $52.1$ & $-35\%$ \\
1B  & 4 & 32 & $61.9$  & $59.3$ & $-4\%$ \\
14B & 1 & 8 & $144.0$ & $140.8$& $-2\%$ \\
14B & 4 & 32 & $94.7$ & $89.1$ & $-6\%$ \\
\bottomrule
\end{tabular}
\end{table}

\paragraph{Compressed Data Parallelism}
There is a prospect for further savings at larger scales, where data parallelism spans multiple pods in a data-center network (DCN) or hybrid sharding is used \citep{zhao2023pytorch}.
A common deployment combines model parallelism within an inter-chip interconnect (ICI) with pure data parallelism across pods \citep{scaling-book}.
Because DCN bandwidth is typically far smaller than ICI bandwidth, shrinking the volume of data-parallel communication is especially valuable in this regime.
Here, Dion3 allows for compressed data-parallel synchronizations analogous to those of \textsc{Dion} \citep[\S 3.3]{ahn2025dion}.
Ordinarily, gradients are averaged across data-parallel replicas at every step so that the momentum buffers, and hence the weights, stay consistent.
With Dion3, however, we need not synchronize the entire momentum matrix at every step, only the selected submatrix $\mM[\mathcal S, :]$.
When the rows $\mathcal S$ can be selected without first synchronizing $\mM$ (e.g., by picking $\mathcal S$ at random), this strategy performs an identical update at a fraction of the communication cost.

\section{Experiments}
\label{sec:experiments}

\subsection{Model Quality Is Preserved}
\label{sec:exp-nordion2}
Of our four contributions, only our update rule explicitly changes the optimizer trajectory.
However, Gram Newton-Schulz and our symmetric GEMM kernels introduce numerical differences due to finite precision arithmetic.
Experiments in \Cref{sec:gram-ns-quality} show that these differences do not affect training.
We also confirmed that, as expected, Dion3 with selection fraction $f = 1$ matches Muon/NorMuon almost exactly (see \Cref{fig:nordion2-loss-curve-1b}).

In contrast, Dion3 with $f < 1$ is a genuinely new optimizer.
In this section, we demonstrate that Dion3 achieves a slightly \emph{better} loss than the baseline.
We did not set out to improve training quality, so this result is a small but pleasant surprise.
At a minimum, it shows that our subselection strategy does no harm to the loss.

We train 1B-parameter dense transformers on 100B tokens of the ClimbMix dataset \citep{diao2026nemotronclimb}, using fully-sharded data parallelism (FSDP) with MXFP8 weights.
(Below, we also train at larger scales.)
As always, Newton-Schulz runs in half precision arithmetic.
The models use grouped-query and sliding-window attention, but are \emph{not} mixtures of experts.
We report the cross-entropy loss (CE) of next-token prediction on a held-out validation split of ClimbMix.
For further details about our setup, see \Cref{app:setup}.
For a fair comparison, we begin by tuning the baseline.
In this section, we compare NorMuon to the NorMuon flavor of Dion3, as initial experiments gave it a slight edge over Muon.
We find that the optimal learning rate and momentum coefficient for NorMuon are $\eta = 0.01$ and $\mu = 0.9$, respectively.

\paragraph{Learning-rate transfer rule} Since Dion3 updates only a fraction of the rows in each step, it reduces the effective step size.
This in turn changes the optimal learning rate.
To understand this effect, we sweep both the fraction $f \in \{\nicefrac{1}{2}, \nicefrac{1}{4}, \nicefrac{1}{8}, \nicefrac1{16}\}$ and the learning rate $\eta$ of Dion3, along with our earlier sweep for NorMuon ($f=1$).
The results in \Cref{fig:lr-fraction-heatmap} show a clear trend: the optimal learning rate for a given $f$ scales as $\eta \propto \sqrt{1/f}$.
We can explain this transfer rule as follows.
For Muon, the size of the update is $\|\eta \cdot \operatorname{polar}(\mM)\|_{\mathsf F} = \eta \sqrt{n}$, where $n$ is the smaller dimension.
For Dion3, it is $\|\eta'\cdot\operatorname{polar}(\mM[\mathcal S, :])\|_{\mathsf F} = \eta' \sqrt{f n}$.
For these to match, we must set $\eta' = \eta / \sqrt{f}$.
We do not bother retuning the momentum coefficient; we simply copy the optimal setting from NorMuon ($\mu = 0.9$) to Dion3.

\begin{figure}[ht]
\centering
\includegraphics[width=0.54\textwidth]{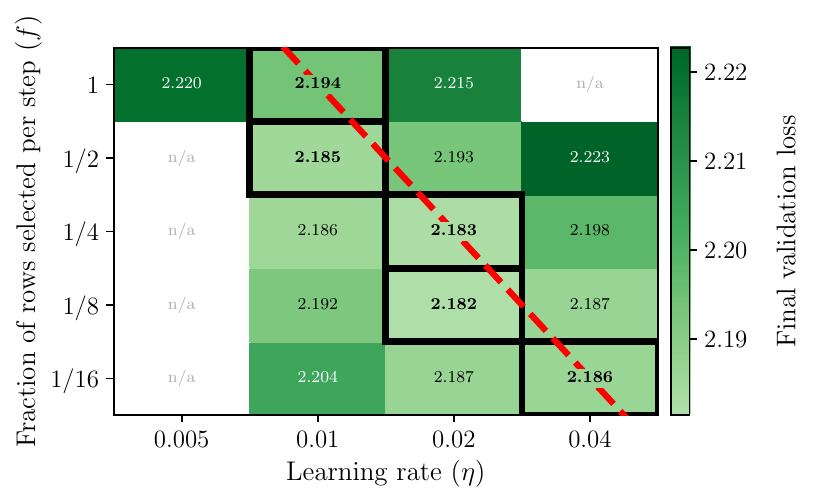}
\hfill
\includegraphics[width=0.44\textwidth]{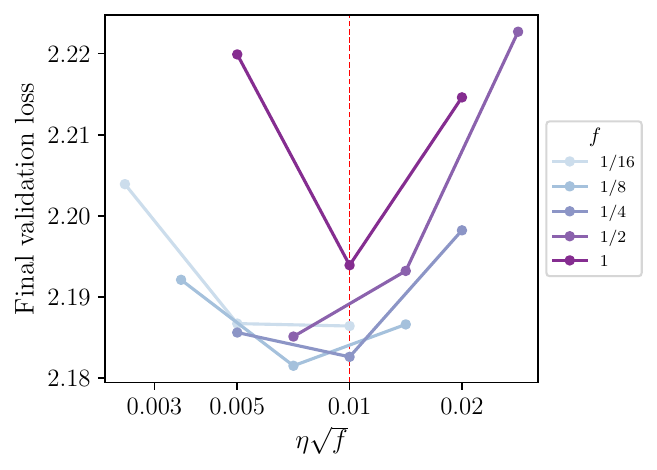}
\caption{\textbf{Left:} Final validation loss for 1B-parameter models trained on 100B tokens of ClimbMix with NorMuon or Dion3, as a function of row-selection fraction $f$ and learning rate $\eta$.
NorMuon is $f=1$.
Bolded cell in each row shows best $\eta$.
Optimal $\eta$s track the line $\eta\sqrt{f} = 0.01$, shown in red on a log-log scale.
Not shown: $f=1, \eta=0.03$ (loss = $2.237$) and $f=\nicefrac1{32}, \eta=0.04$ (loss $=2.191$).
\textbf{Right:} The same data reduced to two dimensions: validation loss as a function of $\eta \sqrt{f}$.
Final loss is minimized when $\eta \sqrt{f} \approx 0.01$.}
\label{fig:lr-fraction-heatmap}
\end{figure}

\paragraph{Dion3 improves the loss} \Cref{fig:lr-fraction-heatmap} contains another remarkable finding: Dion3 with $f<1$ actually outperforms NorMuon when tuned correctly.
Indeed, the lowest loss is achieved at $f = \nicefrac18$.
\Cref{fig:best-nordion2} shows the validation loss curve for each optimizer (excluding $f = \nicefrac14$) at its best learning rate.
It shows a clear gap throughout training between NorMuon and the Dion3 variants, which confirms this finding.

\begin{figure}[ht]
\centering
\includegraphics[width=0.9\textwidth]{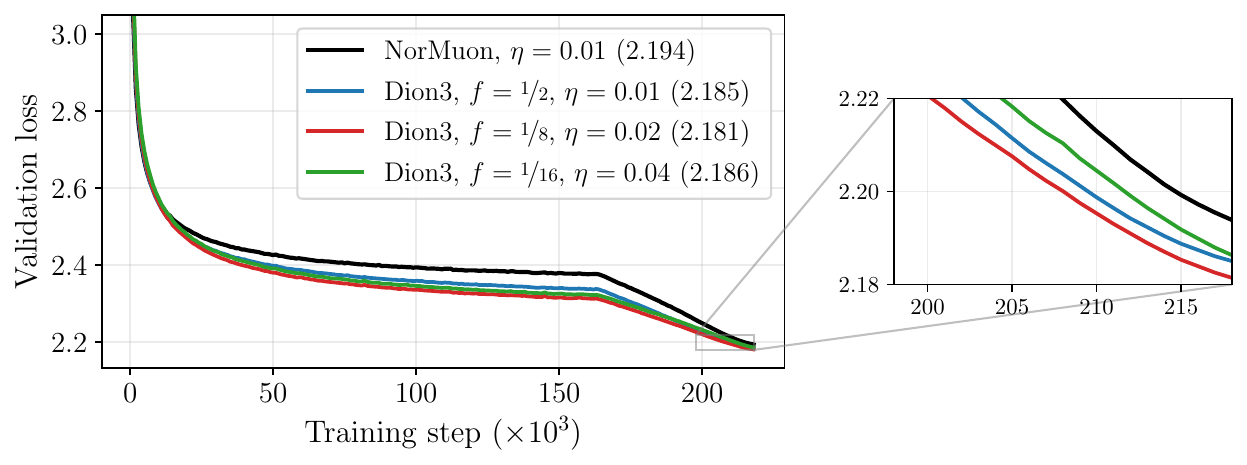}
\caption{Validation loss curves for 1B-parameter models trained on 100B tokens of ClimbMix with NorMuon or Dion3 at $f = \nicefrac12, \nicefrac18$, or $\nicefrac1{16}$.
Each optimizer uses its best learning rate from \Cref{fig:lr-fraction-heatmap}.
Parentheses show final loss.
All Dion3 variants track below the fully-tuned NorMuon baseline throughout training, finishing about $0.01$ points lower.}
\label{fig:best-nordion2}
\end{figure}

To strengthen this finding, we scale up the model size to between 3B and 14B parameters.
For reasons of cost, we now train on 10B tokens instead of 100B.
We compare Dion3 with $f = \nicefrac14$ against NorMuon, each using their best learning rate as identified above.
For each scale and optimizer, \Cref{tab:nordion2-combined} reports both validation loss on ClimbMix and downstream accuracy on a suite of 12 standard benchmarks.
In validation loss, Dion3 outperforms NorMuon at every scale, achieving the largest improvement ($-0.027$) at the largest scale (14B).
\Cref{fig:nordion2-loss-curve} shows that, as for the 1B parameter model, validation loss is lower throughout training when using Dion3.
In downstream accuracy, Dion3 wins at three of four scales, with an improvement of 0.7 percentage points at 14B.

\begin{table}[ht]
\centering
\caption{Dion3 ($f\!=\!\nicefrac{1}{4}$, $\eta\!=\!0.02$) versus NorMuon ($\eta\!=\!0.01$) across model sizes trained on 10B tokens of ClimbMix: final validation loss (cross-entropy; lower is better) and downstream accuracy (macro-average over 12 standard benchmarks: ARC easy/challenge, BoolQ, COPA, HellaSwag, LAMBADA, MMLU, OpenBookQA, PIQA, RTE, TruthfulQA, WinoGrande; higher is better).
The winner of each comparison is bolded.}
\label{tab:nordion2-combined}
\begin{tabular}{@{}rcccccc@{}}
\toprule
Model Size & \multicolumn{3}{c}{Validation Loss} & \multicolumn{3}{c}{Downstream Accuracy (\%)} \\
\cmidrule(lr){2-4} \cmidrule(lr){5-7}
 & NorMuon & Dion3 ($f\!=\!\nicefrac{1}{4}$) & $\Delta$ & NorMuon & Dion3 ($f\!=\!\nicefrac{1}{4}$) & $\Delta$ \\
\midrule
3B  & 2.269 & \textbf{2.257} & $-0.012$ & 53.9 & \textbf{54.9} & $+1.0$ \\
4B  & 2.243 & \textbf{2.232} & $-0.011$ & \textbf{55.2} & 54.9 & $-0.3$ \\
7B  & 2.220 & \textbf{2.206} & $-0.014$ & 56.0 & \textbf{56.1} & $+0.1$ \\
14B & 2.189 & \textbf{2.162} & $-0.027$ & 57.4 & \textbf{58.1} & $+0.7$ \\
\bottomrule
\end{tabular}
\end{table}

\begin{figure}[ht]
\centering
\includegraphics[width=0.9\textwidth]{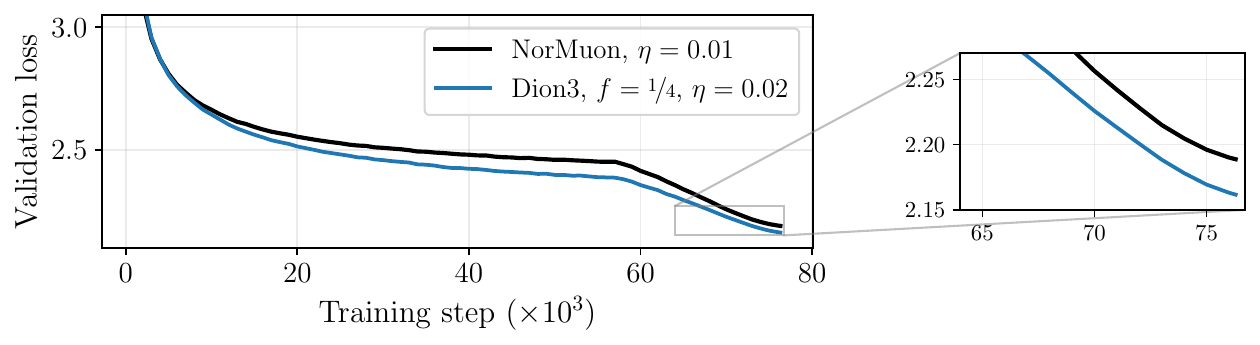}
\caption{Validation loss over training at 14B for Dion3 ($f\!=\!\nicefrac{1}{4}$, $\eta\!=\!0.02$) versus NorMuon ($\eta\!=\!0.01$), trained on 10B tokens of ClimbMix.
Dion3 tracks below NorMuon throughout.}
\label{fig:nordion2-loss-curve}
\end{figure}

\subsection{Dion3 Accelerates the Optimizer}
\label{sec:opt-speedups}
We now measure how much Dion3 reduces the cost of Muon's optimizer step.
Starting with standard Muon, we successively add our improvements---symmetric kernels, Gram Newton-Schulz, and fractional updates with $f = 0.5$ or $f = 0.25$---stacking each on top of the previous ones.
(All versions use megabatching.)
We also compare against plain AdamW.
We run each version of the optimizer on models of various sizes either on 1 GPU or using FSDP on 4 GPUs.
We use CUDA events to measure the GPU time of the optimizer step.
For fidelity, we run full training steps with forward and backward passes (on synthetic data) but these are excluded from the recorded timings.

\begin{figure}[ht]
\centering
\includegraphics[width=\textwidth]{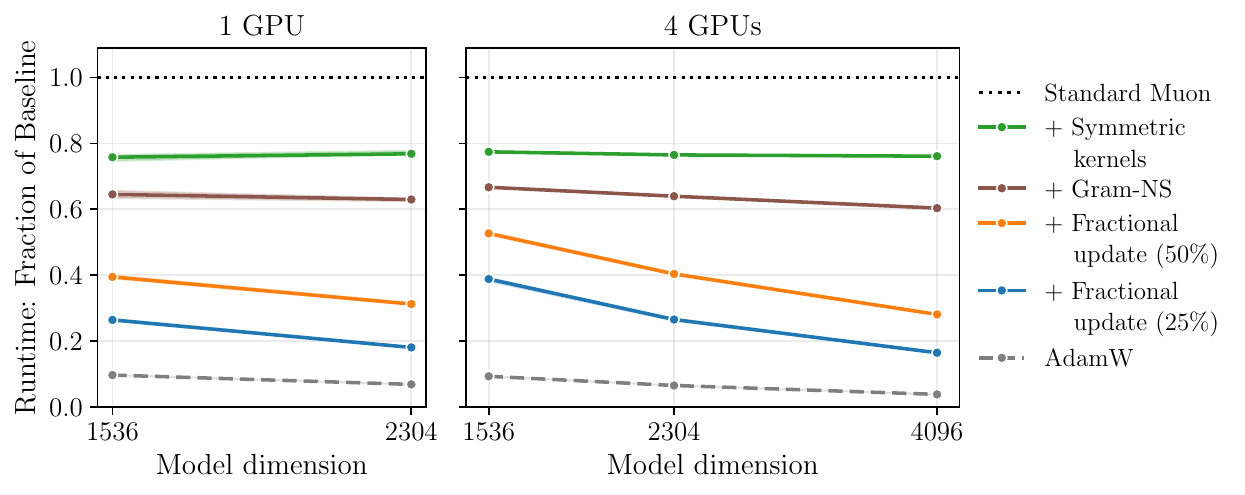}
\caption{Optimizer step time (excluding forward/backward pass) relative to standard Muon across model scales when training on 1 GH200 (left) and FSDP over 4 GH200s (right).
Each colored line adds one of our contributions on top of the previous one; AdamW is shown for reference.
Lines show median over 25 steps and bands show interquartile range.
Adding each of our contributions consistently reduces runtime; together they achieve a $6\times$ speedup for larger models.}
\label{fig:optstep-relative-muon}
\end{figure}

Results are shown in \Cref{fig:optstep-relative-muon}.
A subset of these results appears in \Cref{fig:optstep-bar}, and the corresponding timings for the NorMuon family are given in \Cref{app:nordion2-ablations} (\Cref{fig:normuon-optstep-relative-muon}).
Each part of Dion3 yields a consistent speedup across scales and parallelism configurations.
Our symmetric kernels and Gram Newton-Schulz give a combined speedup of $1.5\times$ or more over standard Muon.
The fractional update rule cuts the runtime dramatically.
As expected, its impact grows with model size, as the cubic computational cost of Newton-Schulz increasingly dominates the other operations.
For sufficiently large models, setting $f = \nicefrac12$ and $f = \nicefrac14$ gives further $2\times$ and $3.7\times$ reductions respectively, for overall speedups of $3.6\times$ and $6.5\times$.
Thus, while Dion3 is still slower than AdamW, the gap is \emph{significantly} smaller.

In \Cref{app:timing-metrics}, we report CPU time and communication volume in addition to GPU time, showing that CPU overhead is small and that fractional updates reduce communication by a factor of $1/f$.
The communication savings of our fractional update rule can be crucial in some settings, though the setting of \Cref{fig:optstep-relative-muon} is compute-bound: neither communication nor host time is exposed.

In \Cref{app:single-gpu}, we conduct an additional suite of benchmarks on a wider range of architectures, including open source models like Gemma and mixtures-of-experts.
On architectures like these, whose weight matrices have higher aspect ratios ($\alpha = 8$ instead of $4$), Gram Newton-Schulz and the symmetric kernels alone achieve a speedup of $2\times$ (see \Cref{fig:optim-step-hopper,fig:kimi-k2-ns}).

\section{Conclusion}
For LLMs trained with Muon, estimates suggest that optimizer step time accounts for between 1\% (e.g., \citet{abadji2026lagunam1xs2technicalreport}) and 17\% of total training time (see \Cref{app:end-to-end_time}).
While Muon has been scaled successfully, doing so has demanded particular choices of architecture and parallelism strategy, along with \emph{significant} engineering effort.
As interest in Muon continues to spread, this paradigm is too brittle; a flexible, general-purpose remedy for Muon's overhead is needed.

Dion3 allows practitioners to easily realize the benefits of Muon without paying the high cost of its orthogonalization step---even for large-scale models in highly distributed settings.
Our Gram Newton-Schulz algorithm and CuteDSL kernels speed up Muon by $1.5\times$ for dense models and $2\times$ for MoEs, a rare case of free lunch performance.
Megabatching has an equally large effect in certain distributed settings.
Dion3's fractional update rule ($f = \nicefrac14$) provides an \emph{additional} $3.7\times$ speedup.
Though this update rule changes the optimizer trajectory, we find that it actually improves training quality in our setting.
This improvement is unexpected, as we designed Dion3 to cheaply approximate Muon rather than improve upon it.
Further work is needed to determine how widely this improvement generalizes, but we find support in a recent result similar to our own: \citet{joo2026surprisingeffectivenessmaskingupdates} showed that randomly masking blocks of the update improves the trajectory of SGD with momentum.
Overall, Dion3 as implemented in our \texttt{dion} and \texttt{gram-newton-schulz} packages provides the tools to make orthogonal optimizers practical and accessible in a wide range of settings.

\paragraph{Acknowledgements}
The authors thank Zichong Li for contributing an implementation of NorMuon to the \texttt{dion} repo.
NA is supported by NSF award 2234660.

\printbibliography[heading=bibintoc]

\clearpage
\begin{appendices}
\crefalias{section}{appendix}
\crefalias{subsection}{appendix}
\crefalias{subsubsection}{appendix}

\section{Alternative Experiments on Gram Newton-Schulz}
\label{app:single-gpu}
This section presents additional experimental results about Gram Newton-Schulz and the symmetric GEMM kernels.
Complementing \Cref{sec:experiments}, these experiments study a wider range of architectures but at smaller scale, using only one GPU at a time.

\subsection{Setup}
\label{app:onegpu-setup}
We train four architectures: Llama-430M, Qwen-600M, Gemma-1B, and a custom MoE architecture with 1B parameters, of which ${\sim}20\%$ are active \citep{grattafiori2024llama3herdmodels,yang2025qwen3technicalreport,gemmateam2025gemma3technicalreport}.
We train on FineWeb-Edu \citep{lozhkov2024fineweb-edu}.
The number of training tokens for each dense model is given by the Chinchilla scaling law; for MoE-1B it is twice the Chinchilla scaling law with respect to the active parameters.
We use a cosine learning rate scheduler with the base learning rates given in \Cref{tab:blog-base-lr}.

We use Muon on most matrix parameters---the attention layer's projection matrices ($\mW_Q, \mW_K, \mW_V$), the projection following attention ($\mW_O$), the SwiGLU MLP weights ($\mW_{\mathrm{MLP,UP}}$, $\mW_{\mathrm{MLP,GATE}}$, $\mW_{\mathrm{MLP,DOWN}}$), and the token router of the MoE ($\mW_{\mathrm{router}}$)---but exclude the embedding and unembedding layers.
Although we are not in the distributed setting, we use megabatching (\Cref{sec:megabatch}), as batching Newton-Schulz's GEMM operations makes them faster.
As is standard for Muon, we adjust the learning rate by scaling the update for each weight matrix based on its dimensions.
For these experiments, we found that Moonshot AI's strategy of scaling by $0.2 \sqrt{\max(\mathrm{fan\_out}, \mathrm{fan\_in})}$ yields the best loss curves \citep[\S 2.2]{liu2025muon}.

\begin{table}[h]
\centering
\caption{Architecture, base learning rate, and training-token budget of each model.}
\label{tab:blog-base-lr}
\begin{tabular}{@{}lrrrrr@{}}
\toprule
Model & Hidden dim & Layers & MLP dim & Learning rate & Training tokens \\
\midrule
Llama-430M & $1024$ & $24$ & $4096$   & $3 \times 10^{-3}$   & $10$B \\
Qwen-600M  & $1024$ & $28$ & $3072$   & $1.5 \times 10^{-3}$ & $12$B \\
Gemma-1B   & $2048$ & $8$  & $16384$  & $3 \times 10^{-4}$   & $22$B \\
MoE-1B     & $1024$ & $9$  & (per-expert) $256$    & $2.5 \times 10^{-3}$ & $11$B \\
\bottomrule
\end{tabular}
\end{table}

\subsubsection{Splitting the Weights}
We draw special attention to the fact that we split $\mW_{\mathrm{MLP,UP}}$ from $\mW_{\mathrm{MLP,GATE}}$ and orthogonalize them separately.
Ordinary implementations of SwiGLU MLPs concatenate these into a single weight matrix; however, their contributions to the activation are fundamentally different, so gradients are different too.
We find that orthogonalizing them separately improves the final loss; for example, in Llama-430M, we observe an improvement of $\approx 0.2$ in perplexity.
For MoE architectures---whose intermediate size is typically smaller than the hidden size---separating them also reduces the FLOP cost of orthogonalization by a factor of two for standard Newton-Schulz and even more for Gram Newton-Schulz.
Likewise, while earlier implementations of Muon orthogonalized the combined matrix $\begin{bmatrix} \mW_Q \,\vert\, \mW_K \,\vert\, \mW_V\end{bmatrix}$, we orthogonalize each piece separately.

We are also aware that in some settings, including pretraining of GLM-5, Muon benefits from splitting the Multi-Latent Attention weights ($\mW^{UQ}$, $\mW^{UK}$, and $\mW^{UV}$) by attention head before orthogonalizing \citep{glm5team2026glm5vibecodingagentic}.
This choice is principled, since the actual function computed by attention treats each head separately; it does not see the concatenated weights.
Inspired by this, we experimented with splitting $\mW_Q$, $\mW_K$, $\mW_V$, and $\mW_O$ by attention head to form $H$ matrices each of size $\tfrac{d}{H} \times d$, where $d$ is the embedding dimension and $H$ is the number of heads.
However, this design led to higher losses throughout training, so we did not adopt it.

Still, we believe that there are other settings like GLM-5 where this strategy works well.
Such cases would benefit \emph{immensely} from Gram Newton-Schulz, since the aspect ratio of these weight matrices would be the number of heads $H$.
For a standard attention weight like $\mW_Q$ with $H=16$ and $T=5$, Gram Newton-Schulz on the little matrices would use $\mathbf{80\times}$ fewer FLOPs than orthogonalizing the big matrix!

\subsection{Kernelized Gram Newton-Schulz preserves quality}
\label{sec:gram-ns-quality}
We first verify that switching from standard Newton-Schulz to our kernelized implementation of Gram Newton-Schulz does not affect training quality.
We try each implementation with both the Polar Express coefficients \citep{amsel2026the} and those of \citet{cesista2025muonoptcoeffs}.
As \Cref{fig:gram-ns-ppl-hopper} shows, the loss curves are identical in all cases, and the final validation perplexity is preserved to within $0.01$.
\Cref{fig:gram-ns-ppl-hopper} uses a Hopper GPU, but we got the same results on Blackwell.

We see loss preserved as follows, when both using the Polar Express coefficients and the coefficients derived by \citet{cesista2025muonoptcoeffs}:

\begin{figure}[h]
\centering
\includegraphics[width=\textwidth]{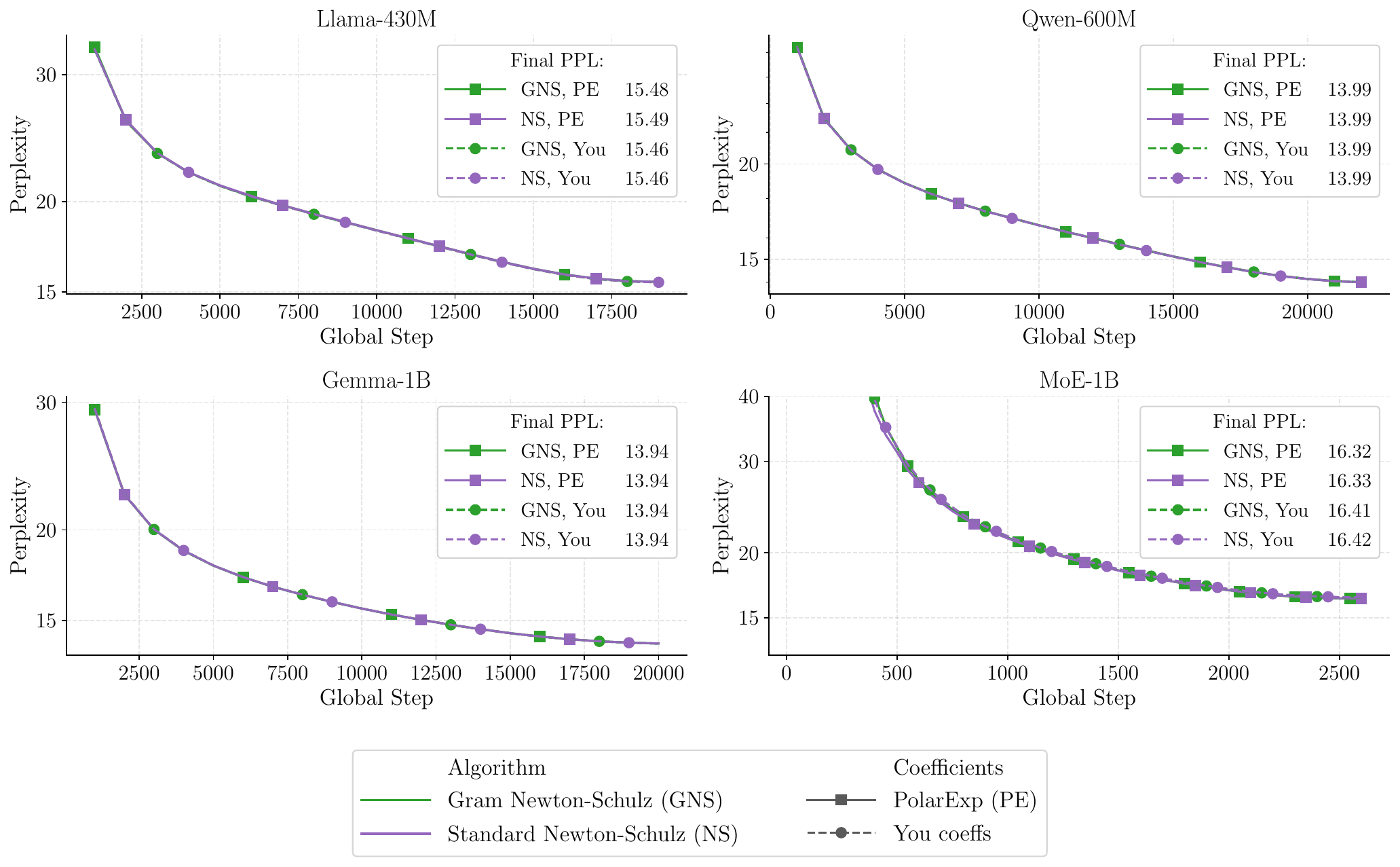}
\caption{When training with Muon on a Hopper GPU, switching from standard Newton-Schulz to Gram Newton-Schulz preserves the validation perplexity throughout training (up to a 0.01 difference in GNS's favor).
Setup is described in \Cref{app:onegpu-setup}.}
\label{fig:gram-ns-ppl-hopper}
\end{figure}

\subsection{Kernelized Gram Newton-Schulz speeds up the optimizer step}

\paragraph{Newton-Schulz Performance}
We observe that our method speeds up the runtime of the Newton-Schulz step by $1.5$--$2\times$.
\Cref{fig:ns-time-per-weight} reports these speed-ups for each model, benchmarked on both H100 and B300 GPUs.
As expected, savings are greatest for weights that are highly rectangular, like Gemma's MLP weights ($\alpha = 8$) and MoE-1B's expert weights ($\alpha = 4$).
Note that these experiments use standard Newton-Schulz as the fallback when $m = n$.

\begin{figure}[p]
\centering
\includegraphics[width=0.85\textwidth]{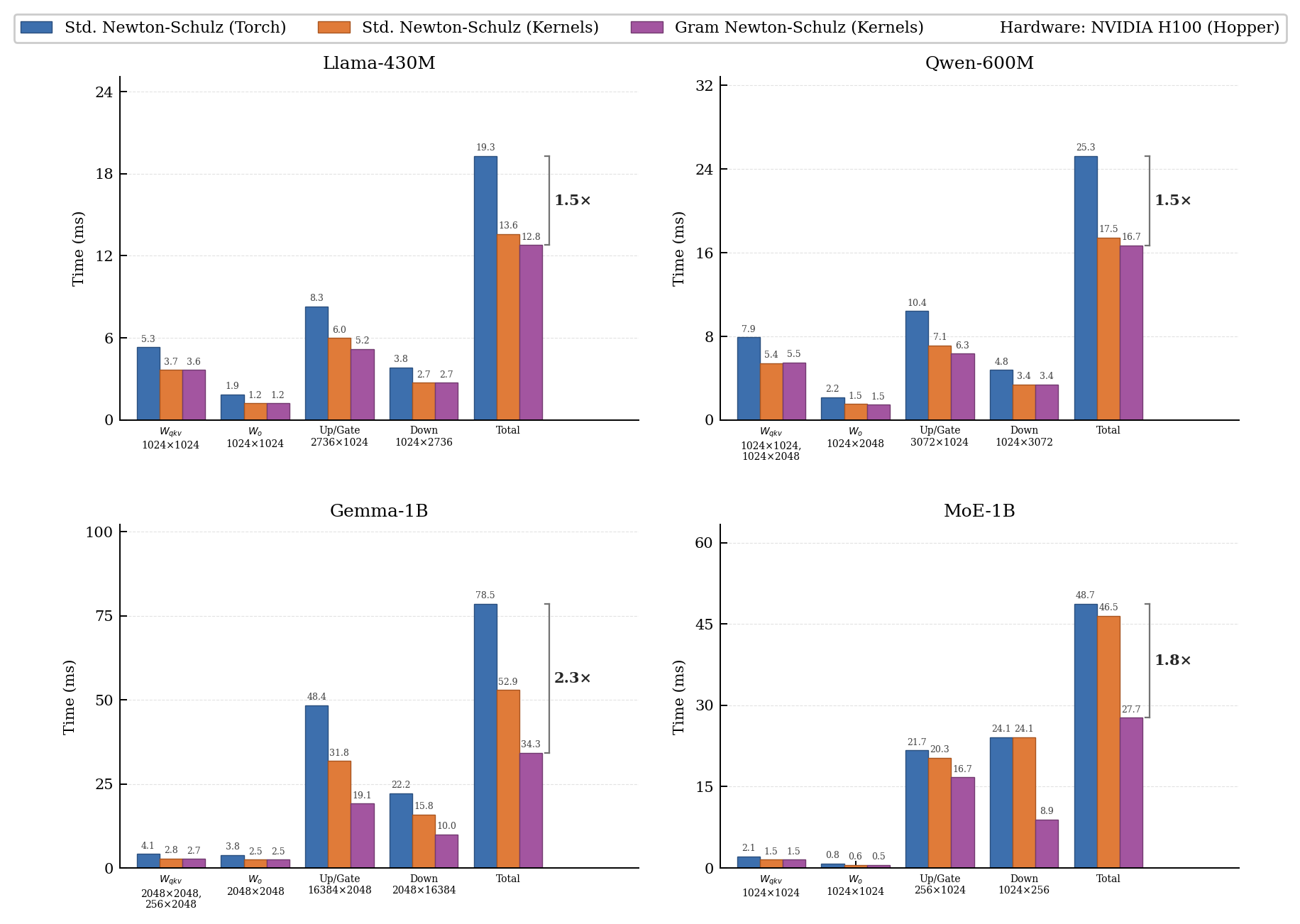}\\
\includegraphics[width=0.85\textwidth]{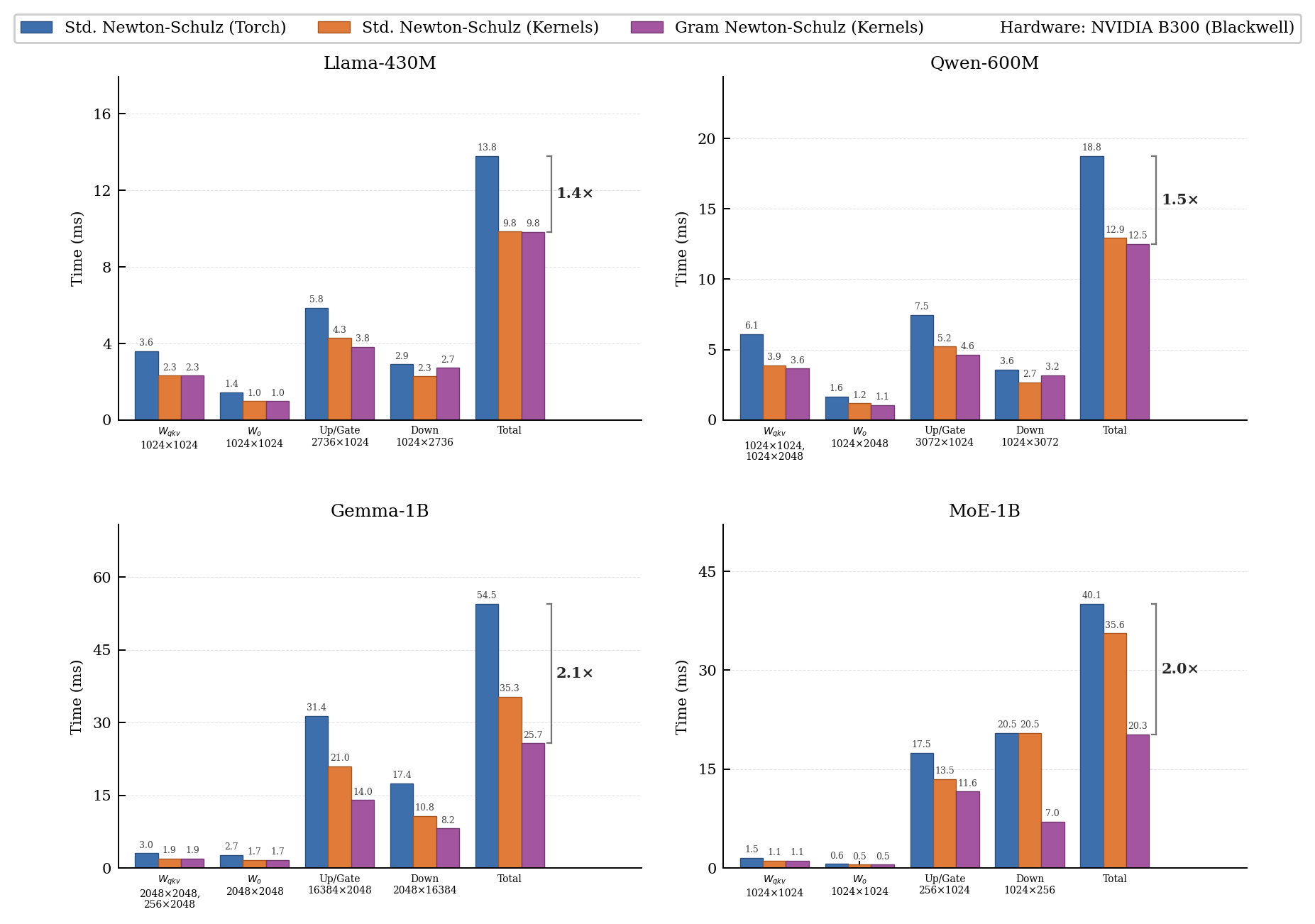}
\caption{Newton-Schulz time per model weight for (1) standard Newton-Schulz implemented in pure PyTorch, (2) standard Newton-Schulz with our symmetric GEMM kernels, and (3) Gram Newton-Schulz with our kernels.
We test four architectures on a single Hopper (top) or Blackwell (bottom) GPU.
Square weights ($\mW_Q$ or $\mW_K$) benefit from the kernels.
Rectangular weights---especially those with a high aspect ratio (e.g.\ Up/Gate in Gemma-1B)---additionally benefit from Gram Newton-Schulz.}
\label{fig:ns-time-per-weight}
\end{figure}

\paragraph{End-to-End Optimizer Performance}
\Cref{fig:optim-step-hopper} shows the end-to-end wall clock time of the Muon optimizer step for each version of Newton-Schulz, along with that of AdamW.
For Muon, these timings include updating the momentum matrix, learning rate scaling, applying the weight update, and the AdamW updates for weights not assigned to Muon (such as the embedding layer and the vector-valued weights).
Our method yields a $1.3$--$2\times$ speedup, with both Gram Newton-Schulz and the kernels contributing significantly.
As before, the impact of Gram Newton-Schulz is greatest for architectures with highly rectangular weights.
Due to its MLP's higher $8\times$ aspect ratio, Gemma-1B sees the largest speedup.

\begin{figure}[h]
\centering
\includegraphics[width=0.75\textwidth]{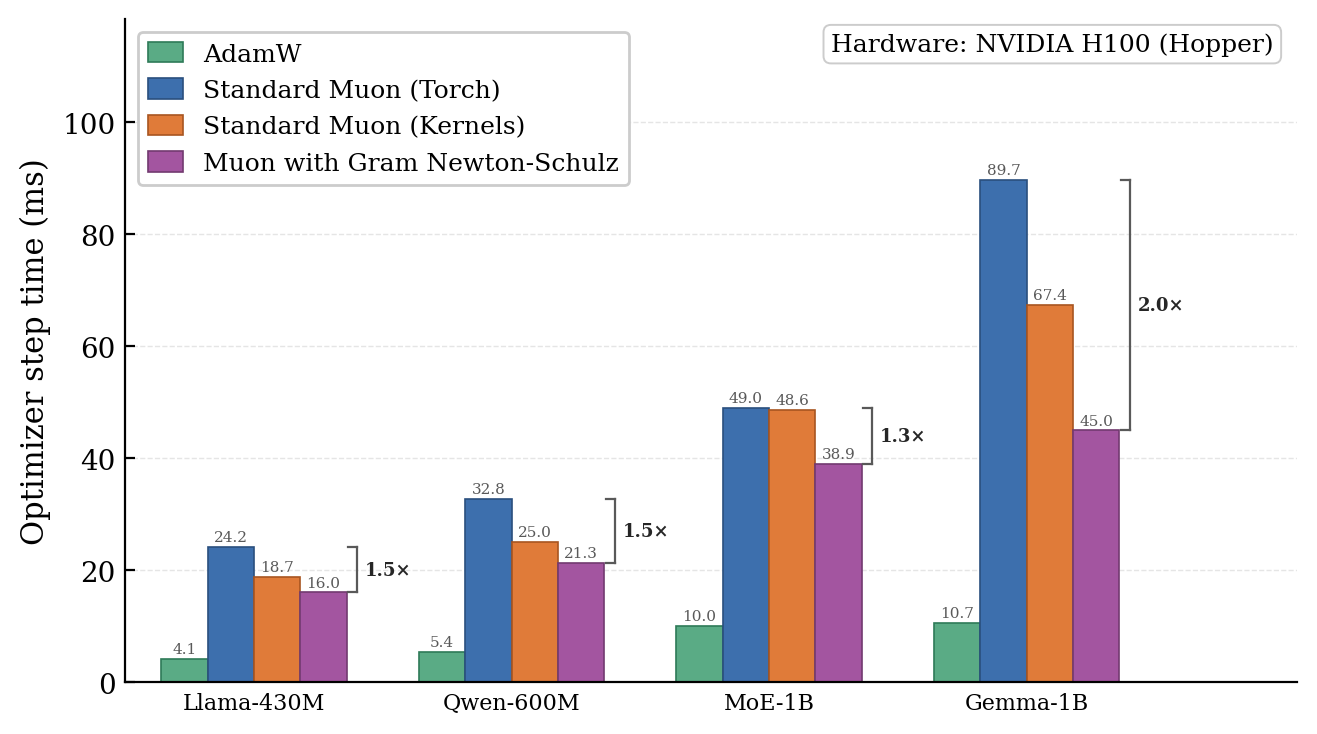}
\caption{Optimizer step time for AdamW and Muon with three different Newton-Schulz routines on a single H100.
Gram Newton-Schulz with our symmetric kernels gives a $1.3$--$2\times$ speedup over standard Newton-Schulz.
Timings include matrix splitting and recombination for QKV and MLP, learning rate scaling, weight updates, and the scalar optimizer (AdamW) step for non-2D weights.}
\label{fig:optim-step-hopper}
\end{figure}

\paragraph{Estimating Gram Newton-Schulz time in Kimi K2}
Kimi K2 is a trillion parameter sparse, fine-grained MoE model with $384$ experts per layer, a hidden size of $7168$, and a small expert intermediate dimension of $2048$.
Since models are trending towards finer-grained MoE architectures and Kimi K2 was trained with Muon, this is a perfect setting to benchmark Gram Newton-Schulz.
Due to Kimi's sophisticated pipeline parallel strategy, the optimizer steps of many of its weights are completely hidden behind the backward pass of the next pipeline stage.
We estimate that orthogonalization steps of following weights are exposed: $216$ expert up/gate/down weights of shape $2048 \times 7168$ and $1$ dense up/gate/down weight of shape $7168\times18432$.
Therefore, we estimate the exposed Newton-Schulz time by orthogonalizing weights of these shapes on a single GPU.
\Cref{fig:kimi-k2-ns} shows the results: our method yields a $2\times$ speedup.
As before, both our kernels and the Gram Newton-Schulz algorithm contribute significantly.

\begin{figure}[ht]
\centering
\includegraphics[width=\textwidth]{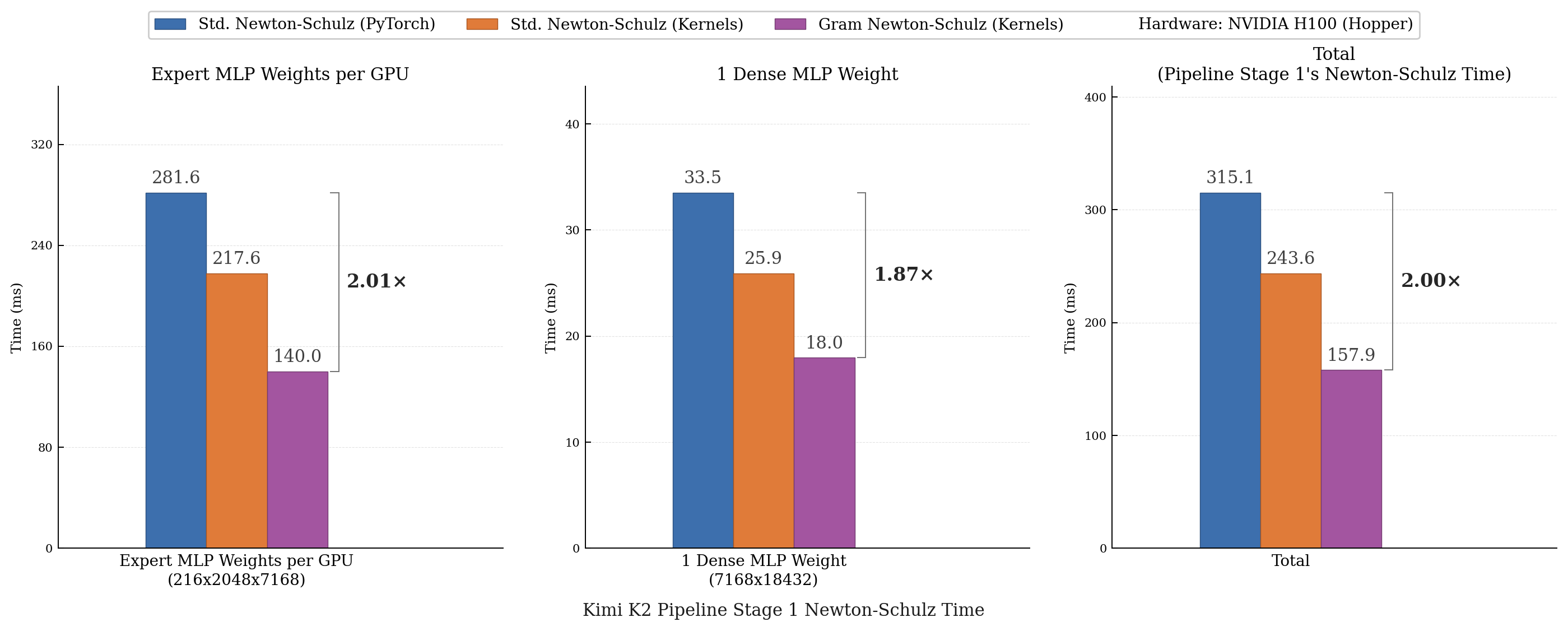}\\
\includegraphics[width=\textwidth]{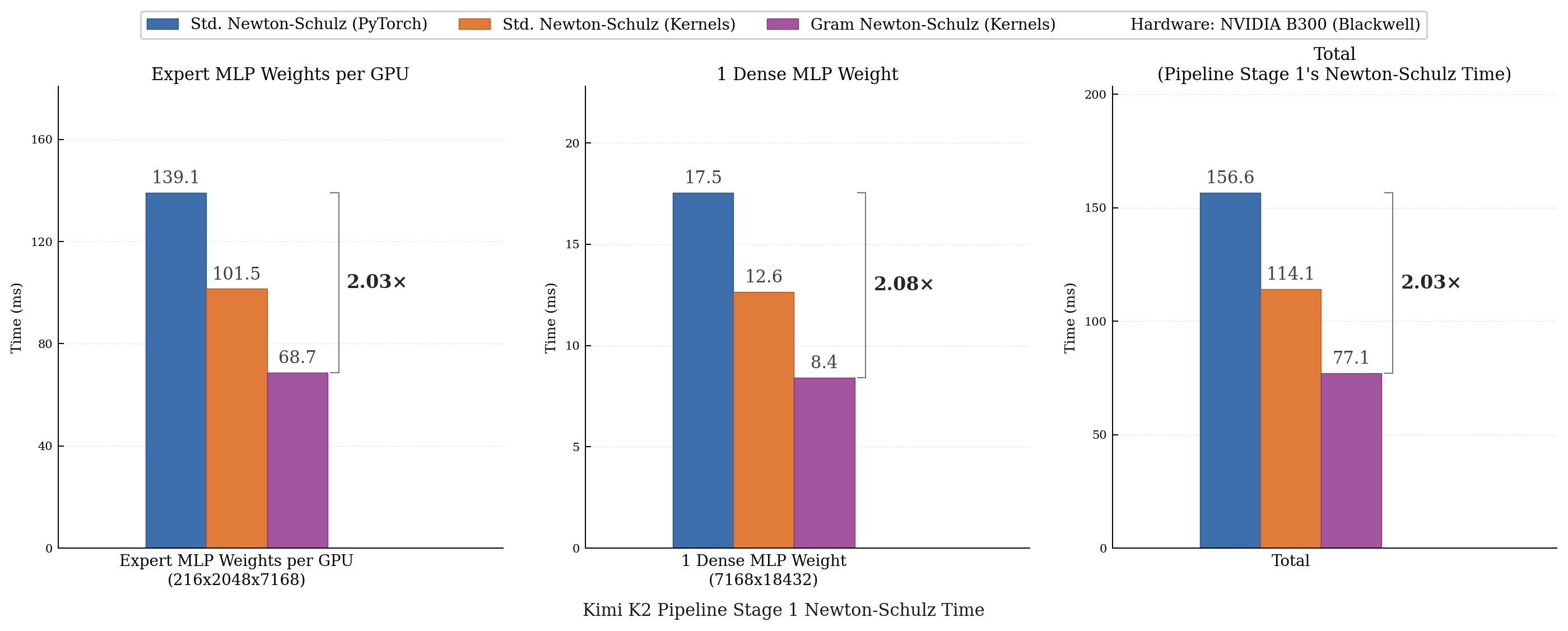}
\caption{Estimated exposed Newton-Schulz time for one step of Kimi K2 with pipeline parallelism.
We measure the runtime of the exposed operations on a single Hopper (top) or Blackwell (bottom) GPU.
Gram Newton-Schulz with kernels is $2\times$ faster than standard Newton-Schulz.}
\label{fig:kimi-k2-ns}
\end{figure}

\section{Stability of Gram Newton-Schulz}
\subsection{Instability of Naive Gram Newton-Schulz}
\label{sec:instability-naive-gram-ns}
We trained Llama-430M with Muon using Naive Gram Newton-Schulz (\Cref{alg:naive-gram-ns}).
The loss curve is shown in \Cref{fig:llama_430_no_reset}.
Clearly, training is very unstable.
Not only do we observe loss spikes, but eventually, the outputs of Gram Newton-Schulz become full of Infs.
The problem is due to floating point arithmetic.
While Gram Newton-Schulz is mathematically equivalent to standard Newton-Schulz in exact arithmetic, it behaves differently in finite precision, especially in half precision.
As half precision is essential for good performance, this makes Naive Gram Newton-Schulz unworkable in practice.

We now analyze the source of the instability in order to motivate our solution.
A Jupyter notebook reproducing the experiments in this section is available \href{https://github.com/NoahAmsel/PolarExpress/blob/appF-stability/gram_newton_schulz_stability.ipynb}{here}.

\begin{figure}[ht]
\centering
\includegraphics[width=0.6\textwidth]{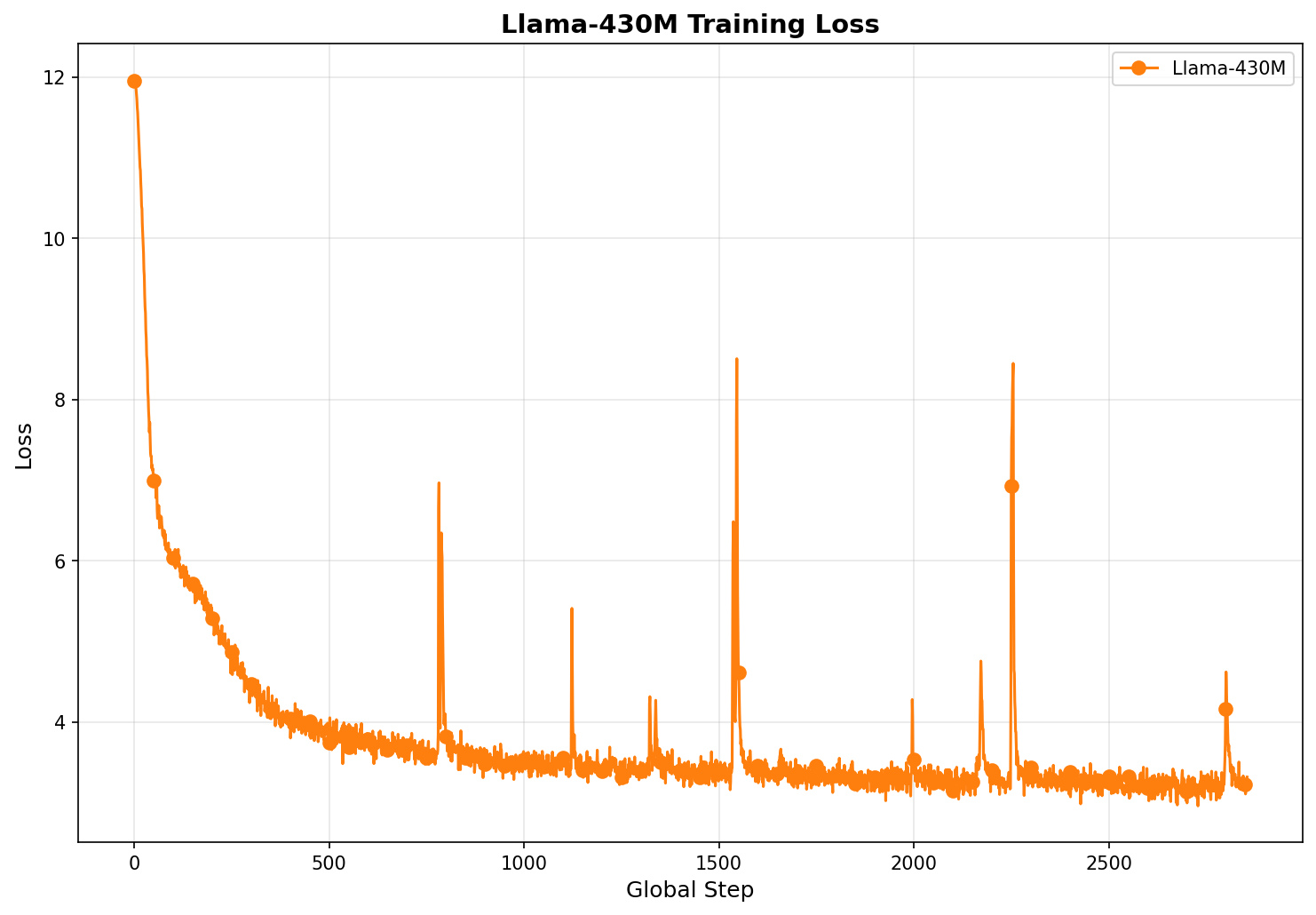}
\caption{Naive Gram Newton-Schulz used to train Llama-430M in half precision.
Numerical instability wrecks training.}
\label{fig:llama_430_no_reset}
\end{figure}

\subsubsection{Tracking Eigenvalues of Intermediate Matrices}
To understand how the matrices in Naive Gram Newton-Schulz evolve and why they diverge, we track their eigenvalues and singular values.
Recall that the entries of any matrix are upper bounded by its largest singular value, so if we can bound the singular values, we can prevent blowups.

As a baseline, we start by running Naive Gram Newton-Schulz in full \texttt{float64} precision for $8$ steps to simulate its behavior in exact arithmetic.
We use a synthetic $128 \times 512$ matrix with an exponentially decaying spectrum.
To make our plots more readable, the experiments in this section use the coefficients $(a_t, b_t, c_t) = (\tfrac{15}{8}, -\tfrac{10}{8}, \tfrac{3}{8})$ at every iteration, but our conclusions will generalize to the coefficients used in practice.
If $\mX_0 = \mU \mSigma \mV^\top$ is the SVD of the input matrix, then the intermediate matrices of \Cref{alg:naive-gram-ns} ($\mR_t$, $\mQ_t$, $\mZ_t$) are square symmetric with eigenvectors $\mU$.
We therefore plot the diagonal entries of $\mU^\top \mR_t \mU$ and $\mU^\top \mQ_t \mU$ against the corresponding singular values in $\mSigma$ to track how each evolves according to the polynomial update rules---or diverges from them.
Even though Gram Newton-Schulz does not need to compute $\mX_1, \ldots, \mX_{T-1}$, we do so here for demonstration using the formula $\mX_t = \mQ_t \mX_0$ and plot $\mU^\top \mX_t \mV$ against $\mSigma$.
The top panel of \Cref{fig:gns_diagnostics} shows that the eigenvalues evolve as predicted by \Cref{thm:gram-ns}.
Initially, $x_0 \in [0, 1]$, $r_0 = x_0^2 \in [0, 1]$ and $q_0 = 1$.
As the algorithm progresses, $r_t \to 1$, $q_t = x_t / x_0 \to 1/x_0$, and $x_t \to 1$.
For $x_0 \approx 1$, convergence is faster; for $x_0 \ll 1$, it is slower.

\begin{figure}[H]
\centering
\includegraphics[width=\textwidth]{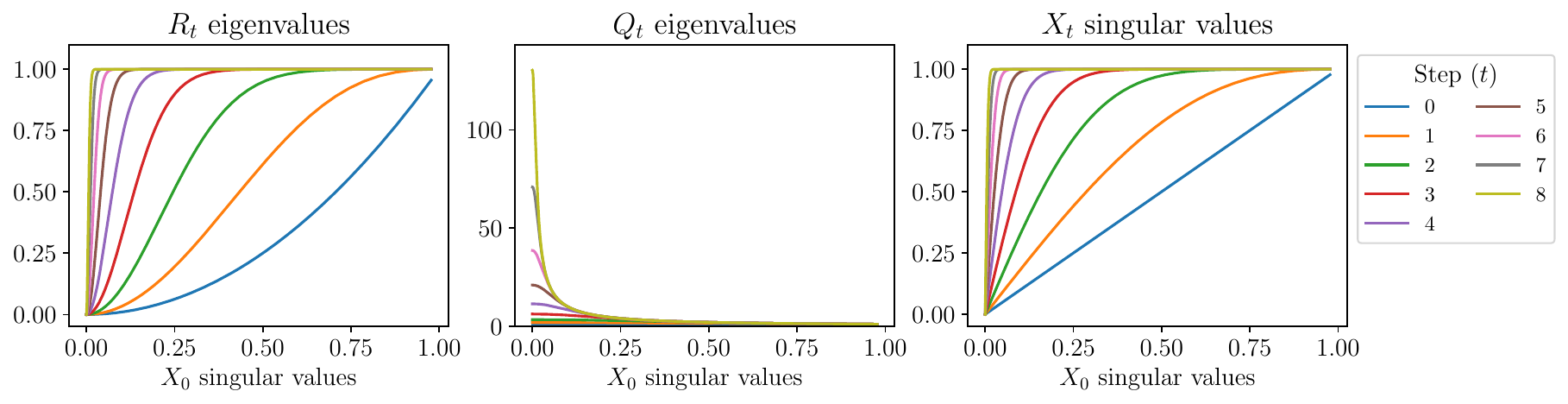}\\[1ex]
\includegraphics[width=\textwidth]{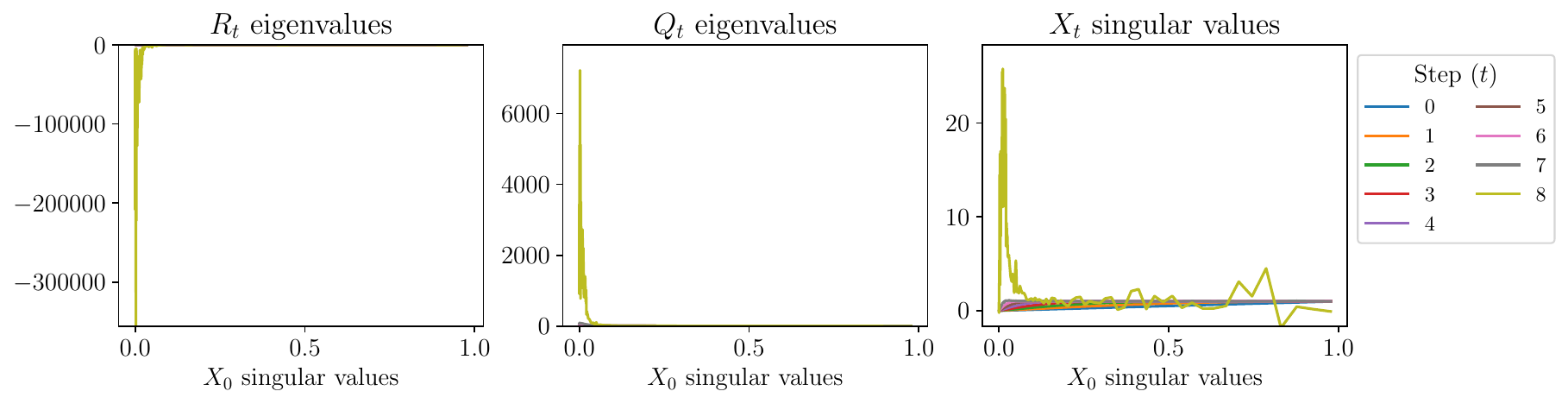}
\caption{Evolution of eigenvalues of $\mR_t$, $\mQ_t$, and $\mX_t$ in Naive Gram Newton-Schulz with $(a_t, b_t, c_t) = (\nicefrac{15}{8}, \nicefrac{10}{8}, \nicefrac{3}{8})$ as a function of the corresponding singular value of $\mX_0$. \textbf{Top:} \texttt{float64}. \textbf{Bottom:} \texttt{bfloat16}.}
\label{fig:gns_diagnostics}
\end{figure}

Now we rerun the experiment in \texttt{bfloat16} arithmetic (\Cref{fig:gns_diagnostics}, bottom).
The first few iterations look correct, but by step 7, we see noisy and unexpected behavior ($r_t < 0$, $x_t > 1$).
By step 8, the spectrum diverges completely, and by step 10 the algorithm returns Infs.
We now describe two key causes of divergence: spurious negative eigenvalues in the Gram matrix $\mX \mX^\top$, and eigenvector drift.

\subsubsection{Spurious Negative Eigenvalues}
The leading cause of divergence is the presence of negative eigenvalues in the Gram matrix due to half-precision arithmetic.
These negative eigenvalues blow up after too many iterations of Gram Newton-Schulz.

By construction, $r_t = x_t^2 \geq 0$, so $\mR_t$ should be positive semidefinite (\Cref{thm:gram-ns}), but \Cref{fig:gns_diagnostics} shows that is has negative eigenvalues.
In fact, even $\mR_0$ has tiny negative eigenvalues introduced in the first matrix multiplication $\mX_0 \mX_0^\top$.
Because $\mX_0$ has many singular values that are nearly zero (as is the case in Muon), $\mR_0 = \mX_0 \mX_0^\top$ has many eigenvalues that are \emph{numerically equal} to zero.
In \texttt{bfloat16}, even a slightly negative number can be numerically equal to zero.
Later computations can introduce additional negative eigenvalues into $\mR_t$ as well.
These eigenvalues represent nothing about the original problem; they are just an artifact of floating point arithmetic.
Therefore, we call them ``spurious eigenvalues''.

These spurious negative eigenvalues start small, but \Cref{fig:gns_diagnostics} shows that their magnitude grows quickly.
Recall the update rule:
\begin{equation}\label{eq:r_update_rule}
r_t = r_{t-1} z_t^2 = r_{t-1} h_t(r_{t-1})^2
\end{equation}
For our choice of coefficients, $h_t(x) = \tfrac{15}{8} - \tfrac{10}{8} x + \tfrac{3}{8} x^2$.
\Cref{fig:r_update_shrunk} plots this update rule.
As it shows, $r_t < \left(\tfrac{15}{8}\right)^2 r_{t-1}$.
Thus, if any $r_t < 0$, the spurious negative eigenvalues grow exponentially, diverging to $-\infty$ !
This sets off a chain reaction that causes $\mQ_t$ and $\mX_t$ to diverge as well.
This problem cannot be fixed by choosing different polynomials; while the main loop of Gram Newton-Schulz approximates the inverse square root of $r_0 > 0$, it diverges for negative inputs.

\begin{figure}[h]
\centering
\includegraphics[width=0.4\textwidth]{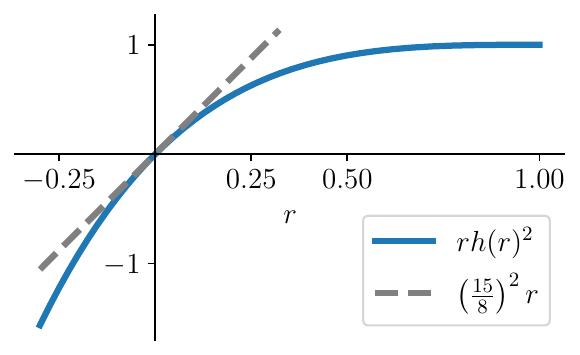}
\caption{When $\mR_t$ evolves according to \eqref{eq:r_update_rule}, negative eigenvalues diverge to $-\infty$.}
\label{fig:r_update_shrunk}
\end{figure}

To prove that the tiny spurious negative eigenvalues of $\mR_0$ suffice to cause divergence, we rerun the experiment in \texttt{float64} precision, but cast $\mR_0$ from \texttt{float64} to \texttt{bfloat16} and then back to \texttt{float64} to induce a small floating point error.
As \Cref{fig:posthoc_f16_diagnostics} shows, this is enough to cause a blowup.

\begin{figure}[h]
\centering
\includegraphics[width=\textwidth]{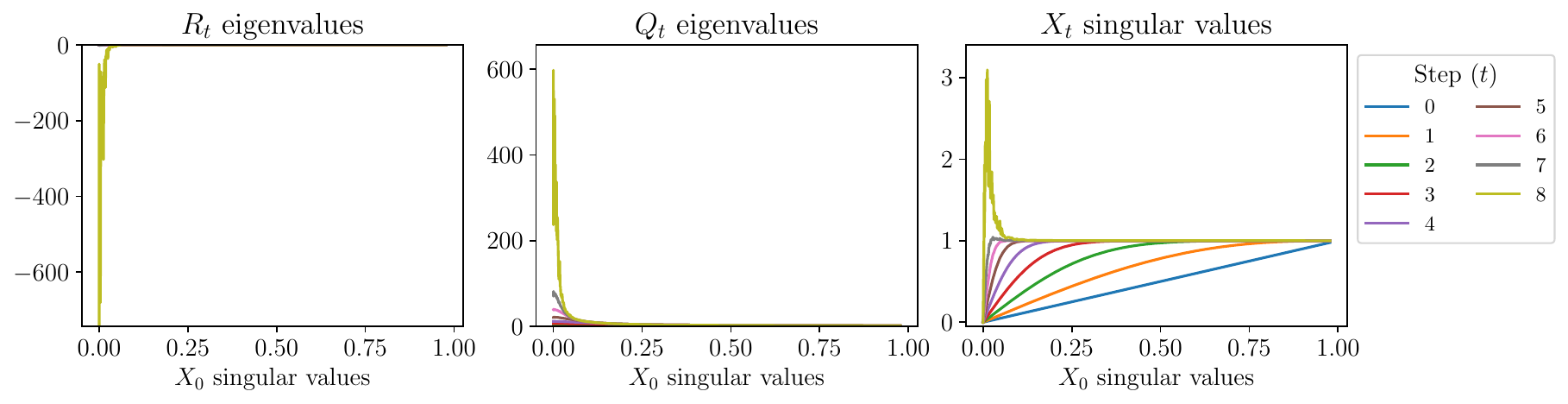}
\caption{Evolution of eigenvalues of $\mR_t$, $\mQ_t$, and $\mX_t$ when all operations use \texttt{float64} except $\mR_0 = \mX_0 \mX_0^\top$, which uses \texttt{bfloat16}.}
\label{fig:posthoc_f16_diagnostics}
\end{figure}

\subsubsection{Eigenvector Drift}
Spurious negative eigenvalues are not the only source of numerical instability.
If we take the input matrix $\mX_0$ to have no small singular values (i.e., all $\geq 0.017$), then we do not observe any negative eigenvalues in $\mR_t$, but $\mX_t$ still fails to converge.
The culprit seems to be ``eigenvector drift''.

In exact arithmetic, the eigenvectors of all intermediate matrices match $\mU$, the left singular vectors of $\mX_0$, but in finite precision they do not.
We demonstrate this effect by measuring how far $\mU^\top \mR_t \mU$, $\mU^\top \mQ_t \mU$, and $\mU^\top \mX_t \mV$ are from being diagonal matrices.
That is, we take the Frobenius norm of the off-diagonal entries as a fraction of the total Frobenius norm.
\Cref{fig:easy_spectrum_diagnostics} shows that after several iterations, the eigenvectors / singular vectors of $\mQ_t$ and $\mX_t$ have drifted significantly from those of $\mX_0$.
At the same time, the eigen\emph{values} of $\mQ_t$ (and by extension, those of $\mX_t$) diverge from where they would be in exact arithmetic.
The growing eigenvalues of $\mQ_t$ seem to spill into one another.
The strength of this effect is less consistent than that of negative eigenvalues, but it is still harmful.

\begin{figure}[h]
\centering
\includegraphics[width=\textwidth]{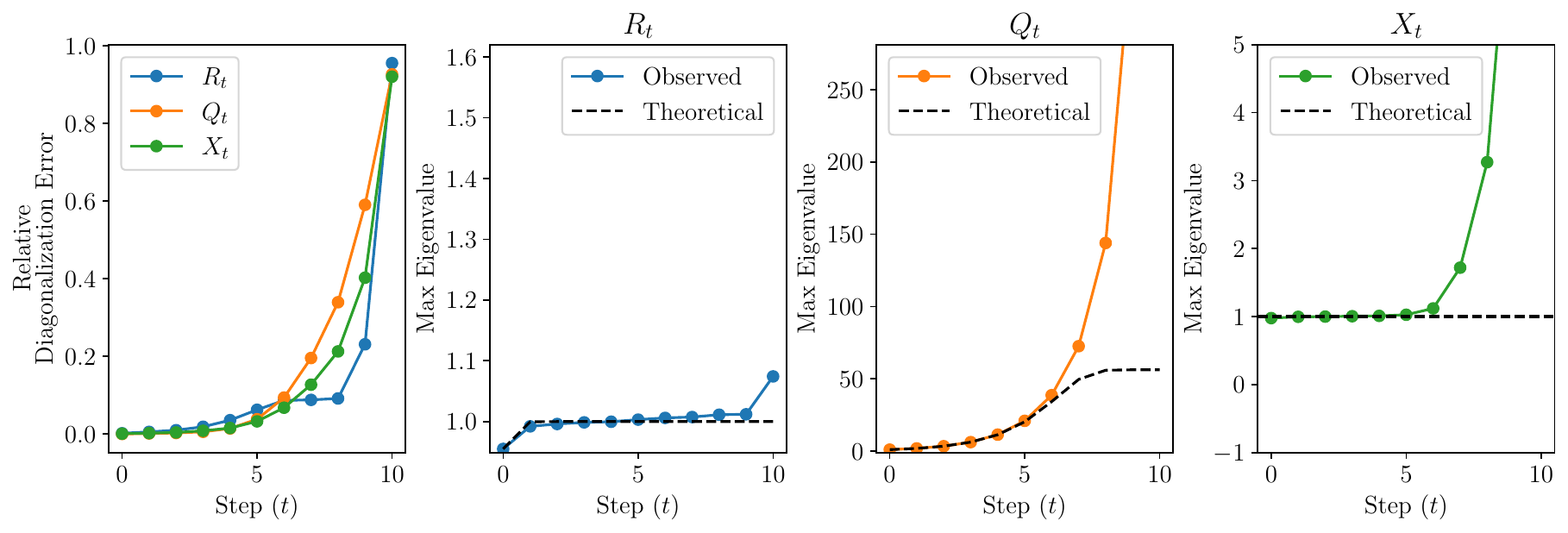}
\caption{As the eigenvectors drift, the spectral norms of $\mR_t$, $\mQ_t$, and $\mX_t$ diverge.}
\label{fig:easy_spectrum_diagnostics}
\end{figure}

\subsection{Stabilizing Gram Newton-Schulz by Restarting}
\label{sec:stabilizing-gram-ns}
As we have seen, if we run Gram Newton-Schulz for more than a few iterations, the spurious negative eigenvalues of $\mR_t$ diverge to negative infinity and $\mQ_t$ blows up.
Our solution is simple: run Gram Newton-Schulz for only a few iterations.
For instance, rather than using Gram Newton-Schulz to compute $\mX_T$ directly, we can use it to compute $\mX_5$ in a stable manner.
While $\mX_5$ is not a good approximation to $\operatorname{polar}(\mX_0)$, it is closer than where we started.
Now we apply Gram Newton-Schulz a second time on the input $\mX_5$ to compute $\mX_{10}$ stably.
This process can be repeated to reach any desired $\mX_T$.
This restarting technique sacrifices some of the performance gains of Gram Newton-Schulz, but it still offers a significant speedup over standard Newton-Schulz.

\Cref{fig:restart5_diagnostics} repeats the experiment from above with a restart every five iterations.
While $\mR_t$ develops some negative eigenvalues, unlike before, the growth of these eigenvalues is controlled.
Each time we restart, we re-initialize $\mR_t = \mX_t \mX_t^\top$, eliminating any large negative eigenvalues.
By restarting regularly, we prevent any from growing too large.
As expected, $\mQ_t$ resets to the identity at iterations $5, 10, 15, 20$, and $25$.
Therefore, the eigenvalues of $\mQ_t$ never grow beyond $\approx 12$, despite the negative eigenvalues in $\mR_t$.
Since the eigenvalues of $\mQ_t$ remain controlled, those of $\mX_t = \mQ_t \mX_{t-5}$ stay at or below $1$.

\begin{figure}[ht]
\centering
\includegraphics[width=\textwidth]{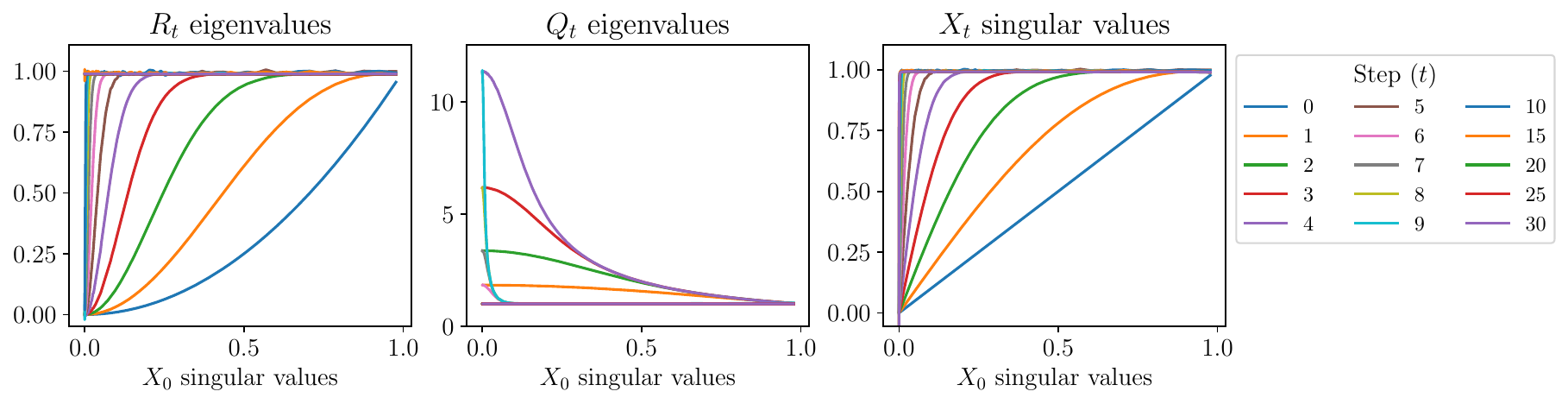}
\caption{Restarting prevents the divergence of $\mR_t$ in half-precision.}
\label{fig:restart5_diagnostics}
\end{figure}

Restarting also helps control eigenvector drift.
We repeat the experiment from \Cref{fig:easy_spectrum_diagnostics} on the same matrix (with all singular values $> 0.017$) with a restart after step $5$.
Diagonalization error always remains $\leq 0.05$, and the maximum eigenvalues now closely track their predicted values.
Note that we always measure eigenvector drift relative to the original input $\mX_0$, not the restarted $\mX_5$.

\begin{figure}[ht]
\centering
\includegraphics[width=\textwidth]{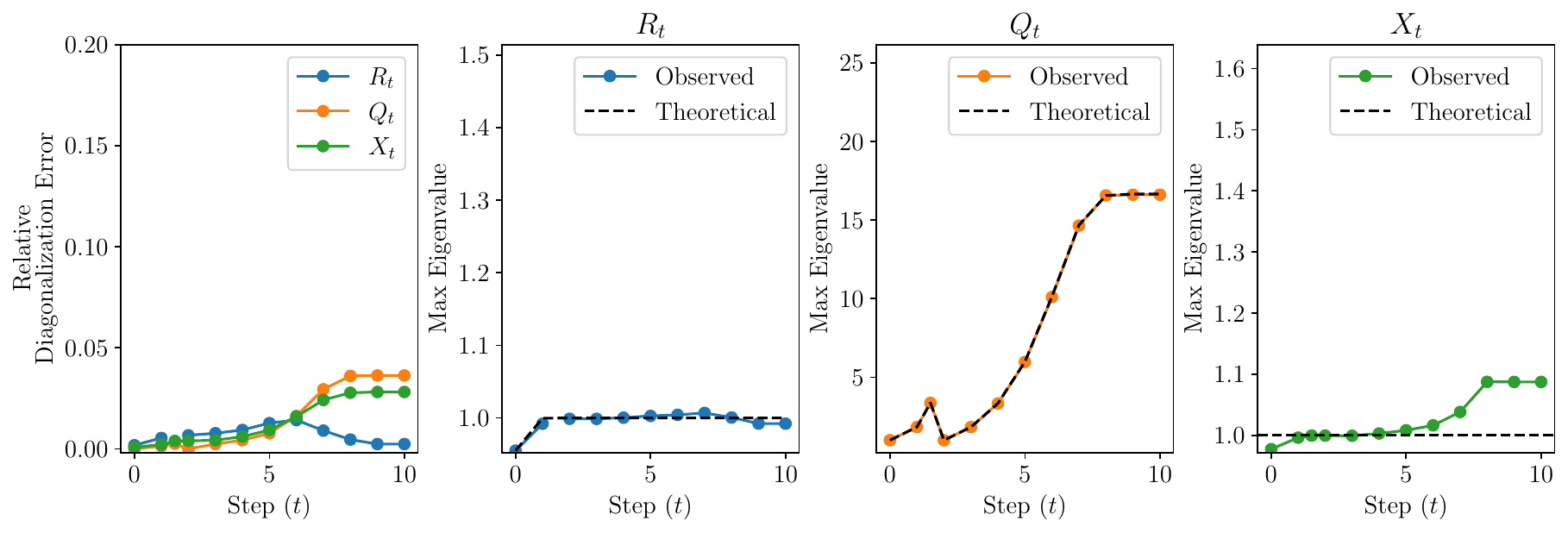}
\caption{Restarting curbs eigenvector drift.}
\label{fig:easy_spectrum_restart2_diagnostics}
\end{figure}

\subsubsection{When to Restart: Polar Express Coefficients for Muon}
We now derive the optimal restarting schedule for a given total number of iterations $T$.
To prevent $\mX_t = \mQ_t \mX_0$ from blowing up, we must control the condition number of $\mQ_t$, even when $\mR_0$ has spurious negative eigenvalues.
(Because $q_t \to 1/x_0 \geq 1$, this also controls the maximum eigenvalue of $\mQ_t$.)
The growth of $\mQ_t$ in turn depends on the size of the spurious negative eigenvalues and the specific sequence of polynomials defined by $\{(a_t, b_t, c_t)\}_{t=1}^T$.
Fixing a desired number of restarts, we sweep over all possible restart schedules and pick the one that minimizes the maximum condition number of $\mQ_t$ across $t$s.

For the application to Muon, we analyze five iterations of the Polar Express coefficients \citep{amsel2026the}.
In the experiments of the previous section, we observe that the most negative spurious eigenvalue of $\mR_0$ is about $-4 \cdot 10^{-4}$.
Therefore, we simulate Gram Newton-Schulz in full precision with Polar Express coefficients and track how the eigenvalues of $\mR_t$ and $\mQ_t$ evolve when $\mR_0$ has eigenvalues in the range $[-4\cdot 10^{-4}, 1]$.
The left panels of \Cref{fig:polar_restart_sweep} show that without restarting, they blow up.
We then repeat the simulation with one restart, sweeping all possible choices of when to restart.
Every time we restart and form $\mR = \mX\mX^\top$, we subtract $4 \cdot 10^{-4}\mI$ to simulate potentially dangerous corruption of the eigenvalues due to floating point error.
As the right panel shows, restarting after the \emph{second} iteration provides the best bounds, ensuring that the eigenvalues of $\mR_t$ stay well above $-0.4$ and those of $\mQ_t$ stay below $\approx 100$ for all iterations.
\Cref{fig:final_diagnostics} shows that for Gram Newton-Schulz with Polar Express coefficients and one restart after the second iteration, all eigenvalues of $\mX_t$ converge stably to $1$ .

\begin{figure}[h]
\centering
\begin{minipage}[b]{0.48\textwidth}
\includegraphics[width=\linewidth]{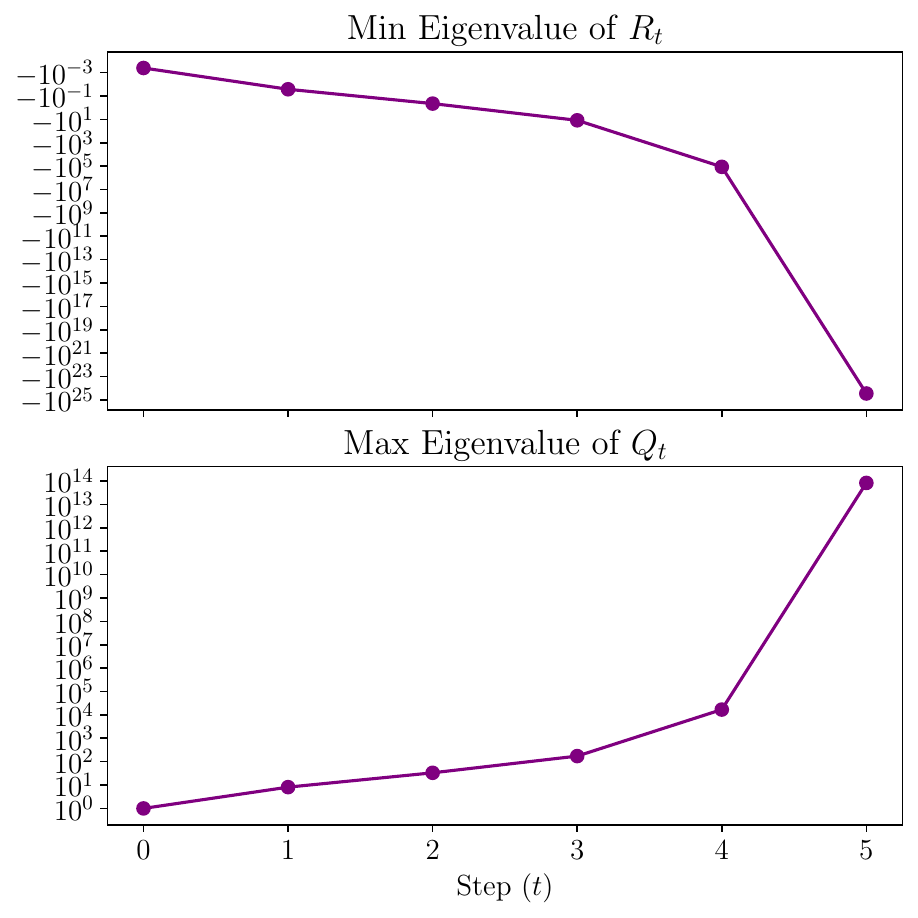}
\end{minipage}
\hfill
\begin{minipage}[b]{0.48\textwidth}
\includegraphics[width=\linewidth]{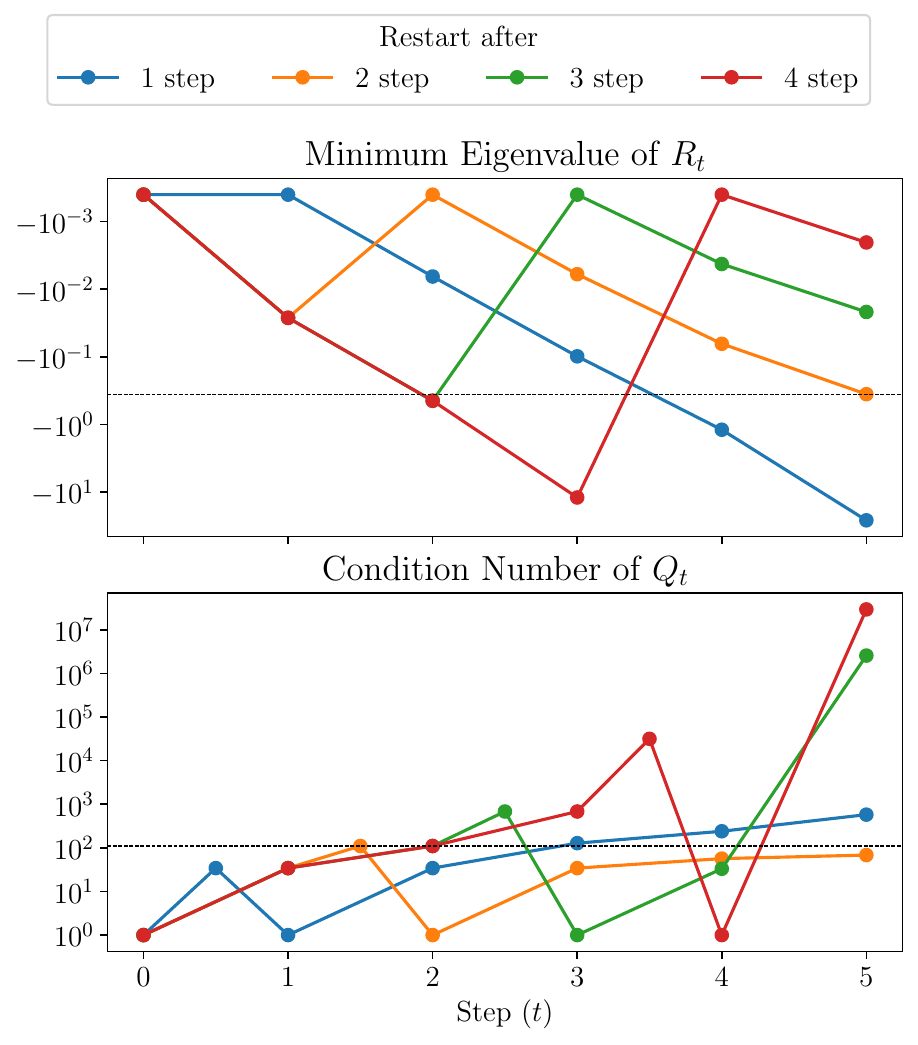}
\end{minipage}
\caption{\textbf{Left:} Min/max eigenvalue of $\mR_t$ and $\mQ_t$ without restarts.
$\mR_0$ starts with a negative eigenvalue at $-4 \cdot 10^{-4}$.
\textbf{Right:} Minimum eigenvalue of $\mR_t$ and condition number of $\mQ_t$ if restart is placed after iteration $1$, $2$, $3$, or $4$.
Restarting after iteration $2$ best controls the size of $\mQ_t$.}
\label{fig:polar_restart_sweep}
\end{figure}

\begin{figure}[H]
\centering
\includegraphics[width=\textwidth]{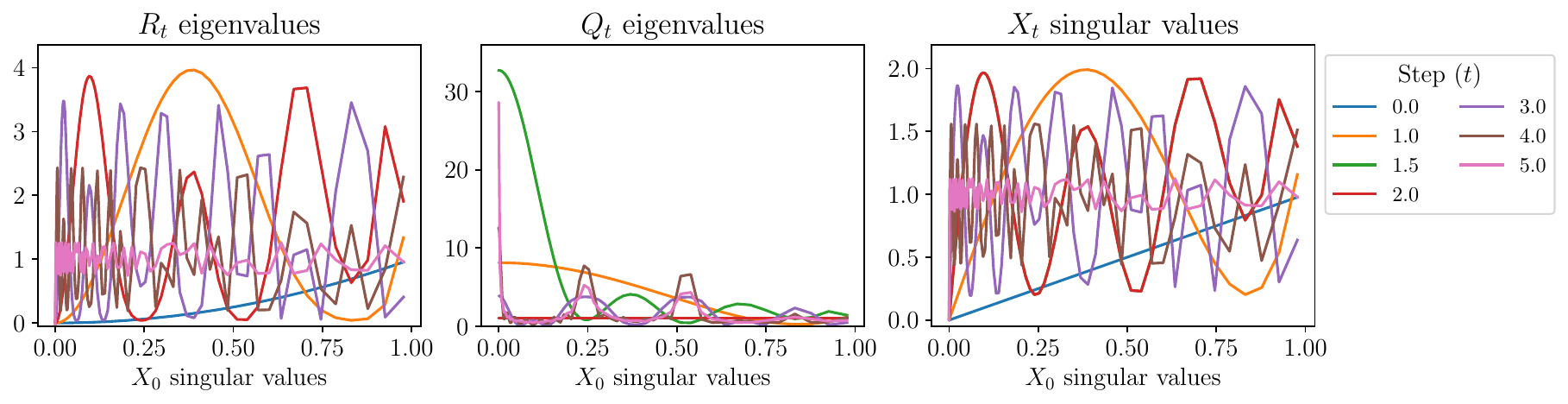}
\caption{Gram Newton-Schulz with Polar Express coefficients and a restart after $2$ iterations converges stably.}
\label{fig:final_diagnostics}
\end{figure}

\subsubsection{Further Precautions}
While restarting greatly improves stability, it is not absolutely foolproof.
A second or third restart may be required if running for more than five iterations or using a numerically sensitive set of coefficients.
Moreover, the usual numerical precautions for standard Newton-Schulz still apply.

\paragraph{Safety Factors}
Most choices of Newton-Schulz polynomials are designed to converge only when the input lies in $[0,1]$; any singular values larger than $1$ may diverge rapidly, as \Cref{fig:X_final_unbounded} shows.
Even when $\mX_0$ is properly normalized, singular values greater than $1$ can arise due to numerical error.
This problem affects standard Newton-Schulz too, so the Polar Express polynomials are typically adjusted according to the formula $\tilde p_t(x) = p_t(x / 1.02)$.
This ensures convergence even for singular values as large as $1.02$.
When using Gram Newton-Schulz, roundoff errors like this can worsen due to computations like $\mX\mX^\top$, which do not have such numerical buffers; however, we have never seen this happen when using our recommended setup (\texttt{float16} arithmetic with restarting after $2$ iterations).
It is wise to be conservative in the choice of safety factor, for instance, by replacing $1.02$ with $1.05$.

\begin{figure}[h]
\centering
\includegraphics[width=0.8\textwidth]{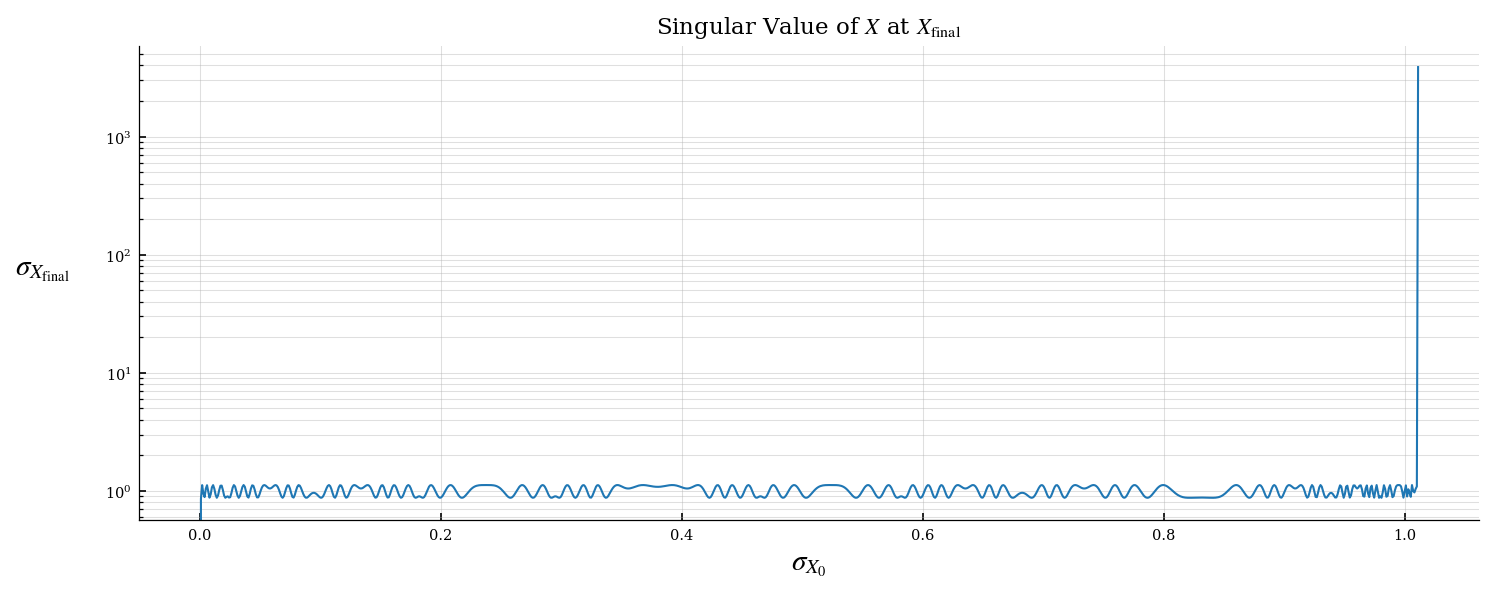}
\caption{Theoretical behavior of both standard and Gram Newton-Schulz (Polar Express coefficients) when $\mX_0$ has singular values slightly above one.}
\label{fig:X_final_unbounded}
\end{figure}

\paragraph{Float16 vs BFloat16 in Newton-Schulz}
\label{sec:float16-vs-bfloat16}
In addition, we argue for using \texttt{float16} instead of \texttt{bfloat16} to implement Newton-Schulz.
Both use $16$ bits and thus run with the same performance; however, the distribution of the $16$ bits across the mantissa and exponent bits differs.
Compared to \texttt{bfloat16}, \texttt{float16} represents values from a narrower range, but it has greater precision within that range.
For Newton-Schulz, however, the range of \texttt{float16} (roughly $6.1\cdot 10^{-5}$ to $6.5 \cdot 10^4$) suffices because the magnitudes of its intermediate matrices are not too large.
On certain test matrices, we see more accurate $\operatorname{polar}(\mX)$ approximations with \texttt{float16}, but in practice, we have not found a case where the training quality is meaningfully different between \texttt{float16} and \texttt{bfloat16}.
Still, we default to \texttt{float16}.

\paragraph{High Accuracy Setting}
In the application to Muon, we do not need to compute $\operatorname{polar}(\mX)$ to high accuracy, and \Cref{sec:gram-ns-quality} shows that Muon with Gram Newton-Schulz yields effectively identical results to Muon with standard Newton-Schulz in terms of training quality.
However, when high accuracy is desired, the usual warnings about forming the Gram matrix apply.
Since forming $\mX \mX^\top$ immediately squares the condition number, Gram Newton-Schulz may not be appropriate in these cases.

\subsubsection{Computing Matrix Quadratics}
\label{sec:computing-matrix-quadratics}
A key step in Gram Newton-Schulz is computing the matrix quadratic $\mZ_t \gets a_t\mI + b_t \mR_{t-1} + c_t \mR_{t-1}^2$.
Standard Newton-Schulz implicitly computes matrix quadratics too, but PyTorch implementations typically avoid adding the identity explicitly.
Rather, they compute $\mX(a_t \mI + b_t \mA + c_t \mA^2)$ in two steps, each with a single GEMM:
\begin{enumerate}
\item $\mB \gets b_t \mA + c_t \mA^2$
\item $\mX \gets a_t \mX + \mB \mX$
\end{enumerate}

Our symmetric GEMM kernel can compute matrix quadratics in one launch, fusing the addition of $a_t \mI$ by adding $a_t$ to all diagonal entries of the output when they are at the register level.
This optimization completely obviates any I/O operations needed for the $a_t \mI$ addition, typically outspeeding \texttt{gemm\_symmetric(A, B, C, c\_t, b\_t) + a\_t * I}, which would require loading $\mI$ from general memory to shared memory to registers.
Once $\mZ_t$ is assembled, Gram Newton-Schulz can perform its three subsequent multiplications without the need for further fused additions.

However, our tests show that adding $a_t \mI$ explicitly can be less stable than distributing it into those later three multiplications.
If we stress-test our method by ignoring some of our own advice---restarting after three iterations instead of two, using a Polar Express safety factor of $1.02$ instead of $1.05$, and computing the quadratic with $a_t \mI$ explicitly---we observe instability.
Interestingly, this instability disappears if we use non-symmetric GEMMs (either from PyTorch or Quack) instead of our symmetric kernels; however, if we force symmetry after calling standard PyTorch GEMMs, we see instability again.
We conclude that fusing $+ a_t \mI$ into a symmetric GEMM is numerically unfavorable; this is not just a kernel bug.

We believe this effect can be explained as follows.
While the fused kernel computes $a_t\mI + b_t \mR_{t-1} + c_t \mR_{t-1}^2$ in \texttt{float32} arithmetic under the hood, the result $\mZ_t$ is rounded back down to \texttt{float16} at the end of the GEMM.
Future computations like $\mQ_t \mZ_t$ suffer from this loss of precision in $a_t$.
In contrast, if the $a_t \mI$ term is handled implicitly, all arithmetic involving $a_t$ takes place in \texttt{float32}.
Therefore, it is more stable to compute $a_t \mQ_t + \mQ_t\left(b_t \mR_{t-1} + c_t \mR_{t-1}^2\right)$ than $\mQ_t\left(a_t\mI + b_t \mR_{t-1} + c_t \mR_{t-1}^2\right)$.

We reiterate that in all our experiments, this instability can be avoided entirely by restarting correctly or using a safety factor of $1.05$.
Out of an abundance of caution, we rearrange Naive Gram Newton-Schulz (\Cref{alg:naive-gram-ns}) to avoid adding $a_t \mI$ explicitly.
We change
\begin{enumerate}
\item $\mZ_t \gets a_t\mI + b_t \mR_{t-1} + c_t \mR_{t-1}^2$ \qquad \qquad // Apply $h_t(\mR_{t-1})$
\item $\mQ_t \gets \mQ_{t-1} \mZ_t$
\item $\mR_t \gets \mZ_t \mR_{t-1} \mZ_t$
\end{enumerate}
to
\begin{enumerate}
\item $\mZ_t \gets b_t \mR_{t-1} + c_t \mR_{t-1}^2$
\item $\mQ_t \gets \mQ_{t-1} \mZ_t +  a_t\mQ_{t-1}$
\item $(\mathbf{RZ})_t \gets \mR_{t-1} \mZ_t + a_t\mR_{t-1}$
\item $\mR_t \gets \mZ_t (\mathbf{RZ})_t + a_t(\mathbf{RZ})_t$
\end{enumerate}

This change fixes all collected training examples in which symmetric GEMMs were less stable than non-symmetric GEMMs.

\section{Kernel Implementation Details}
\label{app:kernel-details}
\subsection{Symmetric GEMM Kernel Details}
\label{sec:symmetric-gemm-kernel-details}
We implement all of our symmetric GEMM kernels with square cluster work tiles.
Hopper uses cluster size $(2, 1)$ and thread block tile size $(128, 256)$, and Blackwell uses cluster size $(2, 1)$ and 2-CTA collaboration, in which the 2 thread blocks in the cluster collaborate on the same big $(256, 256)$ tile.
Notably, highly optimized custom GEMM kernels on Hopper typically use Ping Pong Scheduling, in which the MMA of tile $i$ and the epilogue of tile $i-1$ are overlapped in two consumer warp groups.\footnote{\url{https://pytorch.org/blog/cutlass-ping-pong-gemm-kernel/}}
However, Ping Pong Scheduling uses more registers at once, and $(128, 256)$ is too large of a tile size for Ping Pong Scheduling, leading to register spillage.
This is much slower than standard single producer warp, single consumer warp scheduling.
Thus, our Hopper symmetric kernels do not use Ping Pong Scheduling.
Blackwell GEMM kernels have no explicit conception of Ping Pong Scheduling, since by default in cuBLAS and most kernel libraries, two accumulators are kept in the new tensor memory hierarchy, and MMA is computed on one accumulator while the epilogue is computed on the other.

As a small implementation detail, note that the main diagonal of $256 \times 256$ cluster work tiles is part of the work assigned by the triangular scheduler.
Since their transposed locations are identical to their current locations, we only write those values to general memory once---writing twice can cause inaccurate values or NaNs.

\subsection{Implementation Strategy in Code}
\label{sec:implementation-strategy-in-code}
There are only two differences between the symmetric GEMM kernel and the standard GEMM kernel: the triangular scheduler and the transposed tile write in the epilogue.
We design our symmetric kernel such that it is abstracted around the standard GEMM kernel to enable lightweight but maximally performant GEMM epilogue fusions.
Using these abstractions, we are able to implement the symmetric GEMM kernel for both Hopper and Blackwell in just 160 lines, while achieving state-of-the-art performance.

We override the standard tile scheduler with our triangular scheduler and wrap the symmetric GEMM class around the GEMM with activation class.
GEMM with activation itself is a wrapper around the Blackwell and Hopper default GEMMs.
It supports writing two output tensors---the standard GEMM output (the preactivation) and the standard GEMM output with an activation function such as SwiGLU or ReLU applied (the postactivation).
We define the activation function to be the identity and the postactivation tensor to be the inplace transpose of the preactivation tensor.
Then, when the GEMM with activation class writes to the postactivation, it is really writing to the upper triangle with a transposed layout---this is exactly the intent of the symmetric GEMM kernel.
We override the epilogue of GEMM with activation just to ensure we do not write twice to the diagonal tiles, for the correctness reasons mentioned previously.

\subsection{Kernel Optimizations for Standard Newton-Schulz}
\label{sec:kernel-optimizations-standard-ns}
Using just optimized CuteDSL kernels, we can accelerate standard Newton-Schulz with two changes.

\paragraph{Symmetric Matrix Multiplication}
As discussed above, the matrices $\mA = \mX \mX^\top$ and $\mB = b_t \mA + c_t \mA^2$ computed at each iteration of Newton-Schulz are symmetric by definition.
Therefore, we use our symmetric GEMM kernels for these operations, reducing their FLOP cost by half.

\paragraph{Fused GEMM + Add}
The typical way to implement the non-symmetric multiplication $\mX \gets a_t \mX + \mB \mX$ is to use \texttt{torch.baddbmm}, which calls cuBLAS under the hood.
However, we offer a much faster implementation of this ``Fused GEMM + Add'' operation for Hopper.
Unlike cuBLAS, our ``Fused GEMM + Add'' supports Ping Pong Scheduling for Hopper, which better hides the epilogue addition of $a_t \mX$.

\section{Additional Experiments}
\subsection{Architecture and Optimization Setup}
\label{app:setup}
We now describe details of the experiments in \Cref{sec:exp-nordion2}.
(The benchmarks in \Cref{sec:opt-speedups} are similar, but with minor architectural differences, synthetic data, and GH200s.)

\paragraph{Architecture.}
The models are decoder-only dense transformers (not mixtures of experts).
They use grouped-query attention, sliding-window attention (window $2048$), rotary position embeddings, and RMSNorm, at sequence length $8192$ over a vocabulary of $100{,}352$ tokens.
Weights are stored and updated in MXFP8.
\Cref{tab:app-arch} lists the per-scale dimensions.
Each run uses a single node of eight B200s.

\begin{table}[h]
\centering
\caption{Model dimensions by scale. All share sequence length $8192$, vocabulary $100{,}352$, sliding window $2048$, grouped-query attention, and rotary position embeddings.}
\label{tab:app-arch}
\begin{tabular}{@{}rrrrrr@{}}
\toprule
Scale & Hidden dim & Layers & Heads & KV heads & MLP hidden dim \\
\midrule
1B  & $1536$ & $24$ & $16$ & $8$ & $6656$ \\
3B  & $2304$ & $32$ & $18$ & $6$ & $9856$ \\
4B  & $2560$ & $36$ & $32$ & $8$ & $10752$ \\
7B  & $4096$ & $32$ & $32$ & $8$ & $11008$ \\
14B & $5120$ & $40$ & $40$ & $8$ & $14592$ \\
\bottomrule
\end{tabular}
\end{table}

\paragraph{Optimization.}
Matrix parameters are updated by the orthogonalizing optimizer under study---Muon, NorMuon, or their filtered variants.
For matrix parameters, we multiply the base learning rate by $\sqrt{d_{\text{out}}/d_{\text{in}}}$.
Embeddings, the language-model head, norm scales, and biases are updated by AdamW ($\beta_1 = 0.9$, $\beta_2 = 0.95$) at the base learning rate, without the $\sqrt{d_{\text{out}}/d_{\text{in}}}$ scaling applied to the matrix parameters (the language-model head, in particular, also uses the base rate).
We apply weight decay $0.01$ to the matrix parameters only; the AdamW (non-matrix) groups use no weight decay.
The sequence length is $8192$.
The learning-rate schedule is warmup--stable--cooldown: a $200$-step linear warmup from $2\times10^{-3}$ to the peak learning rate, a constant phase, and a cooldown over the final $25\%$ of training back down to $2\times10^{-3}$.
All reported held-out cross-entropies are taken at the end of cooldown.

\subsection{Finer-Grained Timing Metrics}
\label{app:timing-metrics}
The main text reports GPU (device) time, the metric of interest for the orthogonalization compute.
Here we report three per-step metrics side by side:
\begin{itemize}
\item \textbf{GPU (device) time}, read from CUDA events between two stream markers around \texttt{optimizer.step()};
\item \textbf{CPU (host) time}, from \texttt{time.perf\_counter()} stopped immediately after \texttt{step()} returns and before any device sync --- cost of kernel launch, Python, and any host-side syncs the optimizer performs;
\item \textbf{Communication volume}, the number of bytes moved by the megabatch collectives (\texttt{all\_to\_all}, \texttt{all\_gather}, \texttt{reduce\_scatter}) at each step, measured by tallying the receive-side payload per-GPU.
\end{itemize}
As always, Newton-Schulz runs in half precision arithmetic.
Results for a 14B model sharded over eight B200s and a 7B model on four GH200s appear in \Cref{tab:opt-step-metrics} and \Cref{tab:opt-step-metrics-gh200}, respectively.
Unlike in \Cref{sec:opt-speedups}, we show all four combinations of kernel and Newton-Schulz variant.
We see that Gram Newton-Schulz \emph{without} symmetric kernels achieves up to a $1.2\times$ speedup.
Otherwise, GPU timings are consistent with the results presented in \Cref{sec:opt-speedups}.
The NorMuon family has higher costs overall due to the overhead of its extra normalization steps.
Our contributions still yield similar speedups: up to $1.5\times$ for symmetric kernels and Gram Newton-Schulz, $3.3\times$ for $f=\nicefrac12$, and $5.6\times$ for $f=\nicefrac14$.
\Cref{tab:opt-step-metrics} shows that the CPU cost is negligible---just ${\sim}0.1$ ms across configurations.
This near-constant host cost is a direct consequence of CUDA-graph capture.
Row selection, error feedback, and Gram Newton-Schulz issue many small operations and kernel launches.
Without capture, the host would dispatch each one individually, and this per-launch overhead would dominate, especially for the filtered configurations, whose device work is small.
Capture and replay instead collapse the entire step into a single graph launch, so the host issues one call regardless of how many kernels the step contains or how much they compute.
The reported host time is therefore just the fixed cost of that one launch (${\sim}0.1$ ms), essentially independent of the configuration.
Finally, communication volume is directly proportional to the fraction $f$, as expected.

\begin{table}[ht]
\centering
\caption{Fine-grained per-step metrics for the distributed 14B / 8-GPU benchmark: GPU (device) time from CUDA events, CPU (host) time from \texttt{perf\_counter} stopped before the device sync, and per-GPU communication volume of one step. We distinguish the GPU time between the Muon and NorMuon families of optimizers; CPU time and communication volume are reported once because neither depends on the family: host time is the fixed CUDA-graph launch cost, and communication volume is set by the parameter shapes and the fraction $f$.}
\label{tab:opt-step-metrics}
\small
\setlength{\tabcolsep}{3.5pt}
\begin{tabular}{@{}lllcrrrr@{}}
\toprule
 & & & & \multicolumn{2}{c}{GPU (ms)} & & \\
\cmidrule(lr){5-6}
Configuration & Kernels & Algorithm & NS frac. & Muon & NorMuon & CPU (ms) & Comm (MB) \\
\midrule
Standard Newton-Schulz & PyTorch & Standard & $1$ & 153.8 & 174.1 & 0.1 & 5479 \\
\quad + symmetric kernels & CuteDSL & Standard & $1$ & 132.7 & 153.4 & 0.1 & 5479 \\
\quad + Gram NS & PyTorch & Gram & $1$ & 137.6 & 158.5 & 0.1 & 5479 \\
\quad + kernels + Gram NS & CuteDSL & Gram & $1$ & 106.8 & 127.0 & 0.1 & 5479 \\
\quad + filtering ($f=\nicefrac{1}{2}$) & CuteDSL & Gram & $\nicefrac{1}{2}$ & 66.4 & 73.8 & 0.1 & 2740 \\
\quad + filtering ($f=\nicefrac{1}{4}$) & CuteDSL & Gram & $\nicefrac{1}{4}$ & 34.8 & 38.1 & 0.1 & 1370 \\
\bottomrule
\end{tabular}
\end{table}

\begin{table}[ht]
\centering
\caption{Replication of \Cref{tab:opt-step-metrics} for a 7B-parameter model sharded over a node of four GH200s with NVLink.}
\label{tab:opt-step-metrics-gh200}
\small
\begin{tabular}{@{}lllcrrr@{}}
\toprule
 & & & & \multicolumn{2}{c}{GPU (ms)} & \\
\cmidrule(lr){5-6}
Configuration & Kernels & Algorithm & NS frac. & Muon & NorMuon & Comm (MB) \\
\midrule
Standard Newton-Schulz & PyTorch & Standard & $1$ & 360.6 & 385.4 & 6442 \\
\quad + symmetric kernels & CuteDSL & Standard & $1$ & 274.4 & 315.0 & 6442 \\
\quad + Gram NS & PyTorch & Gram & $1$ & 296.6 & 321.1 & 6442 \\
\quad + kernels + Gram NS & CuteDSL & Gram & $1$ & 217.7 & 252.1 & 6442 \\
\quad + filtering ($f=\nicefrac{1}{2}$) & CuteDSL & Gram & $\nicefrac{1}{2}$ & 101.5 & 116.7 & 3221 \\
\quad + filtering ($f=\nicefrac{1}{4}$) & CuteDSL & Gram & $\nicefrac{1}{4}$ & 59.6 & 68.6 & 1610 \\
\bottomrule
\end{tabular}
\end{table}

\subsection{Benchmarking all-to-all communications}
\label{app:a2a-microbench}

In \Cref{sec:megabatch}, we claim that reducing the number of communication rounds via megabatching is beneficial in two ways: (1) each round adds a fixed overhead not dependent on the size of the payload, and (2) small messages fail to saturate the interconnect.
Here, we verify both claims directly with a standalone microbenchmark of NCCL \texttt{all\_to\_all} over NVLink, independent of any optimizer.
On a single node of four H100s (SXM, NVLink-4 mesh, NCCL 2.29), we sweep the size of the payload per rank from $1$\,KiB to $1$\,GiB. We time each all-to-all with CUDA events (10 warmup, 50 timed iterations), taking the per-iteration time to be the maximum over ranks, and measure the bandwidth.
We confirmed that the transport is a pure NVLink peer-to-peer communication, with no PCIe or network fallback and that the measured bandwidths agree with the \texttt{nccl-tests} \texttt{alltoall\_perf} tool to within $1\%$.

\Cref{fig:a2a-nvlink} shows the results.
Both effects discussed in \Cref{sec:megabatch} are clearly visible.
The latency floor is ${\sim}25\,\mu$s (left panel), of which ${\sim}15$--$17\,\mu$s is host-side dispatch (as measured separately with \texttt{perf\_counter} and no device synchronization).
Without megabatching, we pay this cost over and over.
Bandwidth climbs by more than an order of magnitude as the per-link payload grows (right panel), so coalescing many small all-to-alls into one megabatch increases the effective bandwidth of each transfer.

\begin{figure}[ht]
\centering
\includegraphics[width=\textwidth]{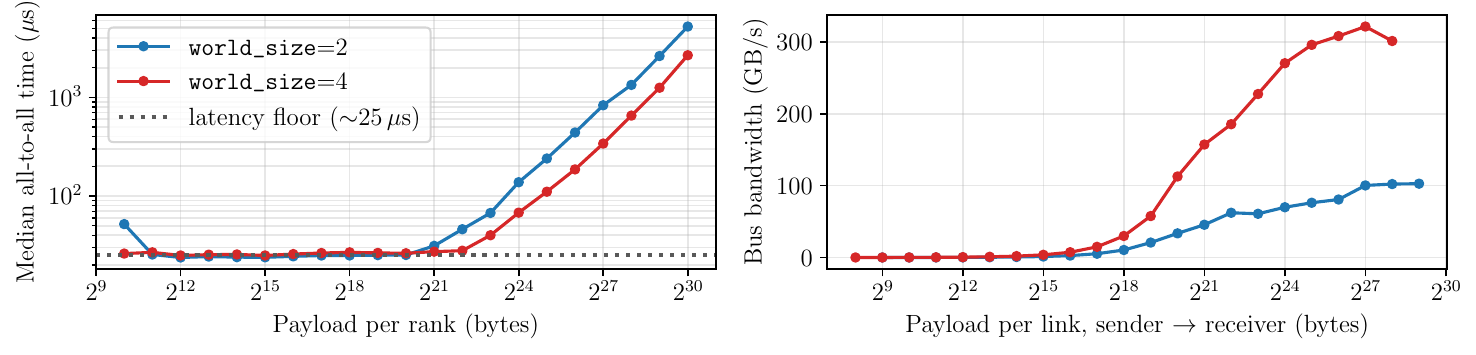}
\caption{NCCL \texttt{all\_to\_all} over NVLink on $4\times$H100.
\textbf{Left:} median all-to-all time versus payload per rank (sender) over 50 trials.
For small payloads, runtime is dominated by a fixed latency.
\textbf{Right:} bus bandwidth versus payload per link (sender $\rightarrow$ receiver).
Bandwidth is near zero for small messages and does not reach $80$--$90\%$ of its peak until the per-link payload is ${\sim}16$--$32$\,MiB.
At $\texttt{world\_size} = 4$, a $256$\,KiB per-link payload achieves only ${\sim}10\%$ of the peak bandwidth.}
\label{fig:a2a-nvlink}
\end{figure}

\subsection{Ablations}
\label{app:nordion2-ablations}
\paragraph{Tuning the baseline}
The experiments in \Cref{sec:exp-nordion2} compare Dion3 to a NorMuon baseline.
\Cref{tab:normuon-tuning}, below, demonstrates that this baseline is properly tuned for a fair comparison.

\begin{table}[ht]
\centering
\caption{Tuning hyperparameters (learning rate $\eta$, momentum $\mu$) of NorMuon at 1B.
Final validation loss on 100B tokens of ClimbMix.
Winner: $\eta=0.01$, $\mu=0.9$ (bold).}
\label{tab:normuon-tuning}
\begin{tabular}{@{}cc@{}}
\toprule
\multicolumn{2}{@{}c@{}}{Sweep $\eta$ (at $\mu=0.9$)} \\
\cmidrule(lr){1-2}
$\eta$ & CE \\
\midrule
$0.005$ & $2.220$ \\
$0.01$  & $\mathbf{2.194}$ \\
$0.02$  & $2.215$ \\
$0.03$  & $2.237$ \\
\bottomrule
\end{tabular}
\qquad\qquad
\begin{tabular}{@{}cc@{}}
\toprule
\multicolumn{2}{@{}c@{}}{Sweep $\mu$ (at $\eta=0.01$)} \\
\cmidrule(lr){1-2}
$\mu$ & CE \\
\midrule
$0.9$   & $\mathbf{2.194}$ \\
$0.949$ & $2.201$ \\
$0.974$ & $2.217$ \\
$0.987$ & $2.238$ \\
\bottomrule
\end{tabular}
\end{table}

\paragraph{Dion3 with no compression}
As a control, we run Dion3 with $f = 1$, which should reduce to plain NorMuon.
\Cref{fig:nordion2-loss-curve-1b} confirms this at 1B; the two curves coincide throughout training, and their final validation losses differ by $0.0005$.

\begin{figure}[ht]
\centering
\includegraphics[width=\textwidth]{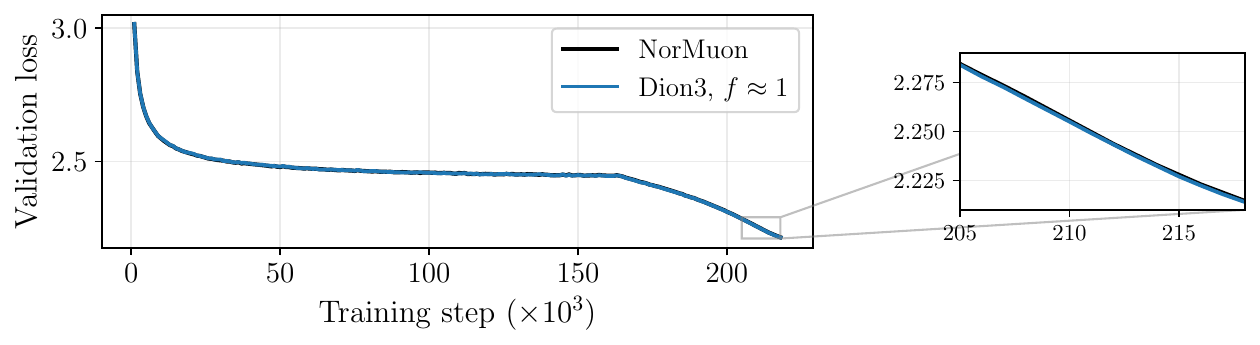}
\caption{Validation loss at 1B for NorMuon and for Dion3 at $f \approx 1$, on 100B tokens of ClimbMix.
As expected, Dion3 tracks NorMuon throughout, ending at $2.2141$ versus $2.2146$.}
\label{fig:nordion2-loss-curve-1b}
\end{figure}

\paragraph{Timings with NorMuon family}
\Cref{sec:opt-speedups} compares the runtime of Muon to the Muon version of Dion3.
Here, in \Cref{fig:normuon-optstep-relative-muon}, we repeat this benchmark with the NorMuon family.
Results are quite alike, but due to the extra overhead of NorMuon's normalization steps, it benefits slightly less from our contributions, which target the orthogonalization step.

\begin{figure}[ht]
\centering
\includegraphics[width=\textwidth]{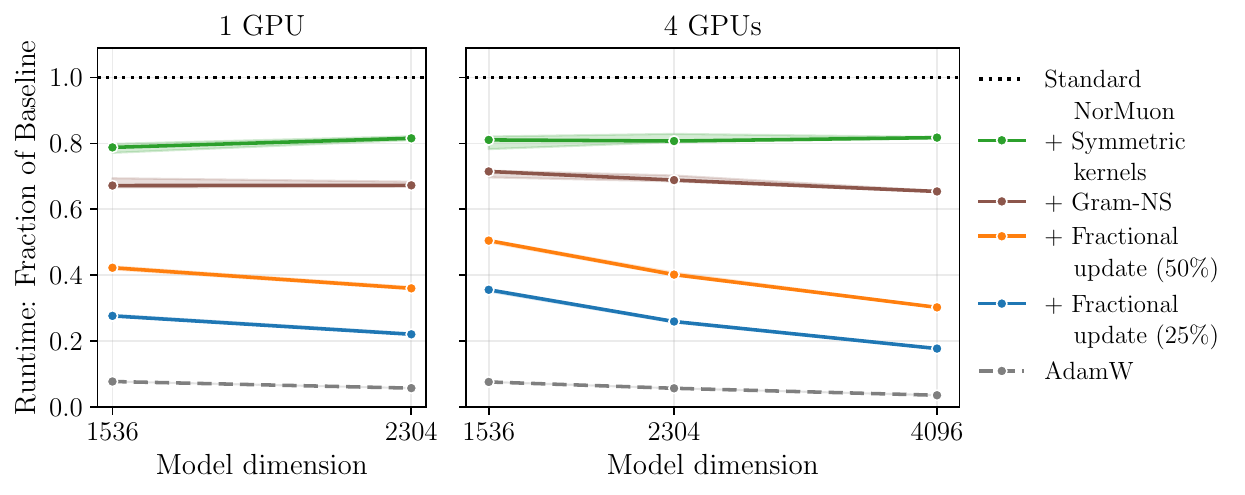}
\caption{Optimizer step time (excluding forward/backward pass) relative to standard Muon across model scales when training on 1 GH200 (left) and FSDP over 4 GH200s (right).
Each colored line adds one of our contributions on top of the previous one; AdamW is shown for reference.
Lines show median over 25 steps and bands show interquartile range.
Compare to \Cref{fig:optstep-relative-muon}.}
\label{fig:normuon-optstep-relative-muon}
\end{figure}

\section{Case Studies of End-to-End Training Time}
\label{app:end-to-end_time}
The share of end-to-end training time taken up by Newton-Schulz can vary widely depending on the training setup.
To explain this variability, we analyze two idealized scenarios.
In one, standard Newton-Schulz takes 2\% of training time; in the other it takes 17\%.

\subsection*{Case Study 1: Standard Newton-Schulz takes 2\% of Kimi K2 training time}

The following analysis gives a very optimistic estimate of the optimizer's wall clock time.
We assume an efficient training infrastructure with highly optimized pipeline parallelism.
Moreover, we assume that the optimizer step of each pipeline stage is completely hidden behind the backward pass of the next pipeline stage.

Kimi K2 Thinking is a $1.1$ trillion parameter model with $32$ billion active parameters.
It has $1$ dense layer followed by $60$ MoE layers \citep{kimiteam2026kimik2openagentic}.
It is pretrained with $256$-GPU model parallel groups, $16$-way pipeline parallelism, $16$-way expert parallelism within each pipeline stage, and a huge batch size of $67$ million tokens.

We use a single H100 to approximate the share of each training step's runtime occupied by Newton-Schulz in this setting under the following assumptions:
\begin{enumerate}
  \item The training cluster consists of $256$ nodes of eight H100s each ($2048$ GPUs in total), connected with NDR 400 Gb/s InfiniBand inter-node (8 NICs per node, 1:1 NIC-to-GPU ratio) and NVLink 4.0 intra-node.
  This is the size of the cluster used to train DeepSeek-V3, with upgraded hardware \citep{liu2024deepseek}.
  \item Training in \texttt{bfloat16} attains $40\%$ model flop utilization (MFU), which is typical for MoEs at this scale on H100s.
  \item The only non-overlapped optimizer time is that of the last pipeline stage to complete its backward pass (i.e.,\ pipeline stage $1$ of $16$).
  The optimizer steps of pipeline stages $2$ to $16$ are fully hidden behind the backwards of stages $1$ to $15$.
  \item Pipeline stage $1$ contains the dense layer and $3$ MoE layers.
\end{enumerate}

Under these assumptions, the optimal way to partition the Newton-Schulz work of pipeline stage $1$ is as follows.
Each of the $16$ GPUs in pipeline stage $1$'s expert parallel group gets
\begin{equation*}
\frac{384 \text{ experts/layer} \times 3 \text{ MoE layers}}{16 \text{ GPUs}}
= 72 \text{ experts/GPU}
= 216 \text{ expert up-gate-down/GPU}.
\end{equation*}
Each of the $16$ GPUs has its own unique expert weights, so no communication is needed.
Pipeline stage $1$ also contains the dense MLP's three $7168\times18432$ weights (up/gate/down) and three shared experts.
Orthogonalizing the dense MLP weights is the dominant cost, so they are sent to three different GPUs; the shared experts are split amongst the remaining $13$ GPUs.
Thus, the orthogonalization time for pipeline stage $1$ is the time it takes for a single GPU to orthogonalize $216$ expert up/gate/down weights and $1$ dense up/gate/down weight.
As benchmarked in \Cref{fig:kimi-k2-ns}, standard Newton-Schulz implemented in PyTorch takes $315$ ms to do this.
Per our assumption, stage $1$'s orthogonalization step is the only one that is not overlapped.

Having estimated the Newton-Schulz time, we now estimate the end-to-end wall clock time of an entire Kimi K2 global training step.
We use the standard estimate of $6NB$ FLOPs for the forward and backward pass, where $N$ is the number of active parameters and $B$ is the global batch size.
Given
\begin{itemize}
  \item Active parameters: $N = 32 \cdot 10^9$
  \item H100 peak: $P = 989 \cdot 10^{12}$ FLOP/s
  \item Model flop utilization: $\text{MFU} = 40\%$
  \item Cluster size: $G = 2048$ GPUs
  \item Global batch size: $B = 67 \cdot 10^6$ tokens
\end{itemize}
we have
\begin{equation*}
  \text{sec/batch}
  = \frac{6N \times B}{P \times \text{MFU} \times G}
  = 15.9
\end{equation*}

Thus, Newton-Schulz takes approximately $\frac{315\text{ ms}}{15900\text{ ms} + 315\text{ ms}} = 1.9\%$ of total pretraining wall clock time in this setting.

\subsection*{Case Study 2: Standard Newton-Schulz takes 17\% of Llama3-70B SFT time}

Llama3-70B is an 80-layer dense model with hidden size $8192$, intermediate size $28672$, and grouped query attention with $\mW_K, \mW_V$ of size $1024 \times 8192$ and $\mW_Q, \mW_O$ of size $8192 \times 8192$ \citep{grattafiori2024llama3herdmodels}.
Supervised finetuning (SFT) typically uses small batch sizes, ranging from $32$ to $256$ sequences \citep{liu2024deepseek}.

We analyze the following SFT case:
\begin{enumerate}
  \item Training uses $32$ H100s across $4$ nodes (8 GPUs per node).
  \item Training in \texttt{bfloat16} hits $40\%$ MFU.
  \item Weights are sharded evenly across GPUs using FSDP, and the exposed Newton-Schulz time is that of $80 \text{ layers} / 32 \text{ GPUs} \approx 3 \text{ layers}$.
  Each layer has $3$ MLP weights (up, gate, down), and the attention weights $\mW_Q$, $\mW_K$, $\mW_V$, and $\mW_O$.
\end{enumerate}

According to our benchmarking, standard Newton-Schulz of
\begin{itemize}
  \item nine $8192 \times 28672$ weights takes $739$ ms,
  \item six $8192 \times 8192$ weights takes $156$ ms,
  \item six $1024 \times 8192$ weights takes $2.32$ ms,
\end{itemize}
totaling $897$ ms. Given
\begin{itemize}
  \item Parameters: $N = 70 \cdot 10^9$
  \item H100 peak: $P = 989 \cdot 10^{12}$ FLOP/s
  \item Model flop utilization: $\text{MFU} = 40\%$
  \item Cluster size: $G = 32$ GPUs
  \item Global batch size: $B = 64 \text{ sequences} \times 2048 \text{ tokens/sequence} = 131{,}072$ tokens
\end{itemize}
we have
\begin{equation*}
  \text{sec/batch}
  = \frac{6N \times B}{P \times \text{MFU} \times G}
  = 4.35\text{ s}
\end{equation*}

Thus, Newton-Schulz takes approximately $\frac{897\text{ ms}}{4350\text{ ms} + 897\text{ ms}} = 17\%$ of total SFT wall clock time in this setting.

\end{appendices}

\end{document}